
\documentclass{article}

\usepackage{microtype}
\usepackage{graphicx}
\usepackage{subfigure}
\usepackage{booktabs} 
\usepackage{bm}
\usepackage{hyperref}


\newcommand{\E}{\mathbb{E}}

\newcommand{\fu}{f}


\usepackage[accepted]{icml2025}

\usepackage{amsmath}
\usepackage{amssymb}
\usepackage{mathtools}
\usepackage{amsthm}

\usepackage[capitalize,noabbrev]{cleveref}

\theoremstyle{plain}
\newtheorem{theorem}{Theorem}[section]

\newtheorem{lemma}[theorem]{Lemma}

\theoremstyle{definition}

\newtheorem{assumption}[theorem]{Assumption}
\theoremstyle{remark}
\newtheorem{remark}[theorem]{Remark}

\usepackage[textsize=tiny]{todonotes}

\icmltitlerunning{Submission and Formatting Instructions for ICML 2025}

\begin{document}

\twocolumn[
\icmltitle{Preventing Model Collapse via Contraction-Conditioned Neural Filters}



\icmlsetsymbol{equal}{*}

\begin{icmlauthorlist}
\icmlauthor{Zongjian Han}{equal,hzj}
\icmlauthor{Yiran Liang}{equal,lyr}
\icmlauthor{Ruiwen Wang}{wrw}
\icmlauthor{Yiwei Luo}{lyw}
\icmlauthor{Yilin Huang}{hyl}
\icmlauthor{Xiaotong Song}{sxt}
\icmlauthor{Dongqing Wei}{wdq}


\end{icmlauthorlist}

\icmlaffiliation{hzj}{School of Mathematical Sciences, Tonji University, Shanghai, 200092, P.\,R. China}
\icmlaffiliation{lyr}{School of Mathematical Sciences, Nankai University, Tianjin, China
}
\icmlaffiliation{lyw}{Independent Researcher}
\icmlaffiliation{wrw}{Longhua Clinical College, Shanghai University of Traditional Chinese Medicine, Shanghai, China}
\icmlaffiliation{hyl}{Dundee International Institute, Central South University, Changsha, China}
\icmlaffiliation{sxt}{School of Mathematical Sciences, Shanghai Jiao Tong University, Shanghai, China}
\icmlaffiliation{wdq}{School of Life Sciences and Biotechnology, Shanghai Jiao Tong University, Shanghai, China}

\icmlcorrespondingauthor{Xiaotong Song}{sxtong1997@sjtu.edu.cn}
\icmlcorrespondingauthor{Dongqing Wei}{dqwei@sjtu.edu.cn}

\icmlkeywords{Machine Learning, ICML}

\vskip 0.3in
]



\printAffiliationsAndNotice{\icmlEqualContribution} 

\begin{abstract}
This paper presents a neural network filter method based on contraction operators to address model collapse in recursive training of generative models. Unlike \cite{xu2024probabilistic}, which requires superlinear sample growth ($O(t^{1+s})$), our approach completely eliminates the dependence on increasing sample sizes within an unbiased estimation framework by designing a neural filter that learns to satisfy contraction conditions. We develop specialized neural network architectures and loss functions that enable the filter to actively learn contraction conditions satisfyiung Assumption \ref{as1} in exponential family distributions, thereby ensuring practical application of our theoretical results. Theoretical analysis demonstrates that when the learned contraction conditions are satisfied, estimation errors converge probabilistically even with constant sample sizes, i.e., $\limsup_{t\to\infty}\mathbb{P}(\|\mathbf{e}_t\|>\delta)=0$ for any $\delta>0$. Experimental results show that our neural network filter effectively learns contraction conditions and prevents model collapse under fixed sample size settings, providing an end-to-end solution for practical applications.
\end{abstract}


\section{Introduction}

\subsection{The Challenge of Model Collapse in the Age of Synthetic Data}

The unprecedented scaling of large language models \cite{achiam2023gpt4,touvron2023llama} has created an insatiable demand for training data, far exceeding the available supply of high-quality, human-generated content \cite{villalobos2024data}. To address this gap, researchers and practitioners increasingly turn to \textbf{synthetic data}—data generated by the models themselves. However, this practice poses a critical risk: as synthetic data proliferates online and is inevitably incorporated into future training cycles, it can trigger a degenerative process known as \textbf{model collapse} \cite{shumailov2024collapse}.

Model collapse refers to the progressive degradation in model performance and diversity when generative models are iteratively trained on their own synthetic outputs \cite{shumailov2024collapse,alemberger2024mad}. As illustrated in Figure~\ref{mx2}, this recursive process causes the model to gradually forget the original data distribution, leading to a final model that generates homogeneous and often meaningless outputs.

\subsection{Foundations: Models, Parameters, and Recursive Training}

To establish a precise foundation for our work, we first define the key concepts:

\begin{itemize}
    \item \textbf{Model}: In this paper, we consider a \textbf{parametric generative model} $\mathbb{P}_{\bm{\theta}}$, which is a probability distribution over data points $\bm{x} \in \mathcal{X}$, completely determined by a parameter vector $\bm{\theta} \in \bm{\Theta}$. For example, in a Gaussian model, $\bm{\theta}$ would represent the mean and covariance parameters.

    \item \textbf{Model Parameters}: The vector $\bm{\theta}$ that defines the model's behavior. The goal of training is to estimate these parameters from data. We denote the \textbf{true parameters} that generate real data as $\bm{\theta}^*$, and the estimated parameters at step $t$ as $\widehat{\bm{\theta}}_t$.

    \item \textbf{Recursive Training Workflow}: The standard process that leads to model collapse follows this iterative pattern:
    \begin{enumerate}
        \item Start with real data $\mathcal{D}_0 = \{\bm{x}_{0,i}\}_{i=1}^{n} \sim \mathbb{P}_{\bm{\theta}^*}$
        \item For $t = 1, 2, \ldots, T$:
        \begin{itemize}
            \item Train model $\mathbb{P}_{\widehat{\bm{\theta}}_t}$ on dataset $\mathcal{D}_{t-1}$
            \item Generate new synthetic data $\mathcal{D}_t = \{\bm{x}_{t,i}\}_{i=1}^{n_t} \sim \mathbb{P}_{\widehat{\bm{\theta}}_t}$
            \item Use $\mathcal{D}_t$ to estimate parameters for the next generation: $\widehat{\bm{\theta}}_{t+1} = \mathcal{M}(\mathcal{D}_t)$
        \end{itemize}
    \end{enumerate}
\end{itemize}

This workflow creates a chain: $\mathbb{P}_{\bm{\theta}^*} \rightarrow \mathbb{P}_{\widehat{\bm{\theta}}_1} \rightarrow \mathbb{P}_{\widehat{\bm{\theta}}_2} \rightarrow \cdots \rightarrow \mathbb{P}_{\widehat{\bm{\theta}}_T}$, where each model is trained on data from the previous generation.
\begin{figure}[ht]
\vskip 0.2in
\begin{center}
\centerline{\includegraphics[width=\columnwidth]{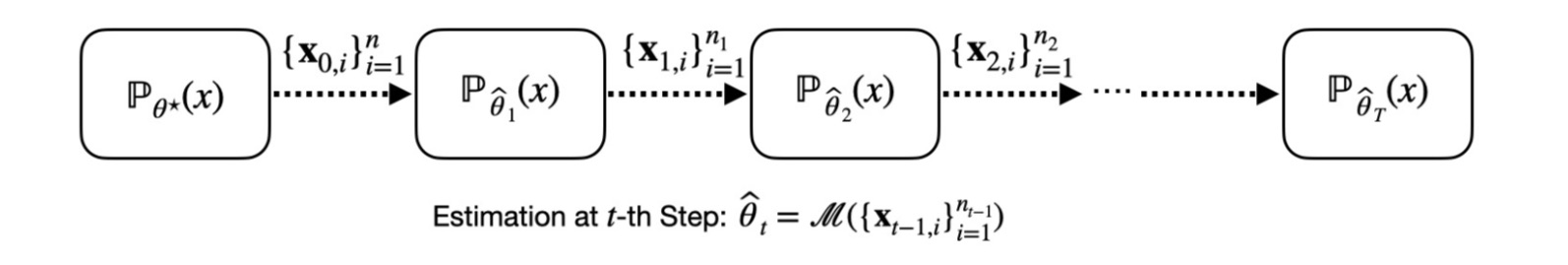}}
\caption{A General Framework for Recursive Training with Fully Synthetic Data.\cite{xu2024probabilistic}}
\end{center}
\vskip -0.2in
\end{figure}
\subsection{A Probabilistic Foundation: Random Walks in Parameter Space}

To demystify why this recursive workflow leads to collapse, \cite{xu2024probabilistic} provide a powerful probabilistic perspective. They conceptualize recursive training as a \textbf{random walk of the model parameters} in the parameter space. This framework elegantly captures the core of the problem:

\begin{itemize}
    \item \textbf{The State}: The estimated parameter $\widehat{\bm{\theta}}_t$ at generation $t$ is the current position of the walker.
    \item \textbf{The Step}: The transition from $\widehat{\bm{\theta}}_{t-1}$ to $\widehat{\bm{\theta}}_t$ is a random step. The direction is determined by the randomness in the finite sample $\mathcal{D}_{t-1}$ drawn from the previous model.
    \item \textbf{The Step Size}: Crucially, the variance of this step—its expected size—is governed by the \textbf{sample size} $n_t$. A larger $n_t$ yields a more precise estimate and a smaller step ($ \|\widehat{\bm{\theta}}_{t} - \widehat{\bm{\theta}}_{t-1}\|_{2} \propto 1/\sqrt{n_t} $).
    \item \textbf{The Drift}: The estimation procedure $\mathcal{M}$ can introduce bias, acting as a consistent force that "drags" the random walk in a particular direction, thereby accelerating its divergence.
\end{itemize}

\begin{figure}[ht]
\vskip 0.2in
\begin{center}
\centerline{\includegraphics[width=\columnwidth]{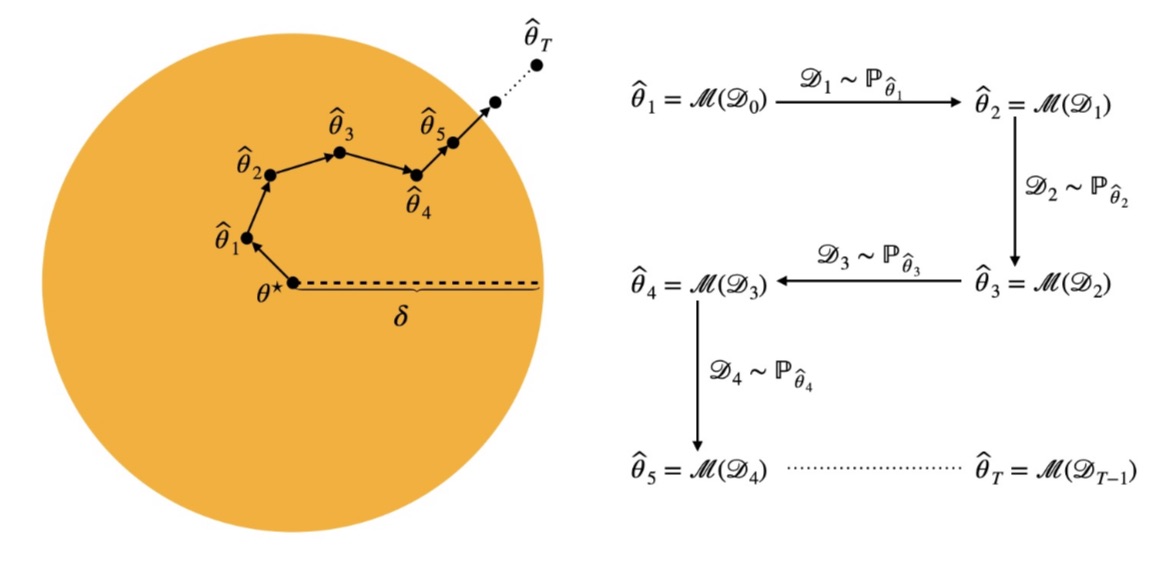}}
\caption{This image explains the principle of model collapse from the perspective of random walks. \cite{xu2024probabilistic}}
\label{mx1}
\end{center}
\vskip -0.2in
\end{figure}

Within this framework, model collapse is the inevitable consequence of this random walk \textbf{drifting away from the true parameter} $\bm{\theta}^*$. \cite{xu2024probabilistic} rigorously show that with a fixed sample size ($n_t = n$), the cumulative error diverges: $\lim_{T\to\infty} \mathbb{E}[(\widehat{\bm{\theta}}_{T} - \bm{\theta}^*)^2] \to \infty$, and the model's diversity vanishes with high probability.

\subsection{The Prevailing Solution: A Theoretically Sound but Practically Limiting Strategy}

The random walk perspective leads to a natural solution: to prevent the walk from straying too far, one must \textbf{progressively reduce the step size}. \cite{xu2024probabilistic} derive the precise conditions for this, proving that to prevent collapse:

\begin{itemize}
    \item For \textbf{unbiased estimators}, the sample size must grow at a \textbf{superlinear rate}, i.e., $n_t = O(t^{1+s})$ for some $s > 0$.
    \item For \textbf{biased estimators}, an even \textbf{faster growth rate} is required, as the bias systematically accelerates the divergence.
\end{itemize}

\begin{figure}[ht]
\vskip 0.2in
\begin{center}
\centerline{\includegraphics[width=\columnwidth]{mxtt2.png}}
\caption{This image explains the principle of model collapse from the perspective of random walks.\cite{xu2024probabilistic}}
\label{mx2}
\end{center}
\vskip -0.2in
\end{figure}
While this constitutes a critical theoretical milestone, the proposed solution presents a \textbf{prohibitive practical barrier}. A superlinear growth schedule implies that computational and storage costs escalate rapidly with each generation. Training over many iterations would become computationally infeasible, rendering this strategy unsuitable for the long-term, sustainable evolution of generative models in real-world scenarios.

\subsection{Our Approach: Steering the Random Walk with Learned Contraction Filters}

We propose a paradigm shift from \emph{containing} the random walk to \emph{actively steering} it. Instead of using exponentially more data to shrink the step size, we introduce an \textbf{neural filter} that learns to correct the walk's direction, ensuring it consistently converges toward the true parameter.

\textbf{Our Enhanced Workflow:} Integrates this filter into the recursive training process:
\begin{enumerate}
    \item Start with real data $\mathcal{D}_0 \sim \mathbb{P}_{\bm{\theta}^*}$
    \item For $t = 1, 2, \ldots, T$:
    \begin{itemize}
        \item Generate candidate synthetic data $\mathcal{D}_t^{\text{candidate}} \sim \mathbb{P}_{\widehat{\bm{\theta}}_t}$
        \item Apply neural filter $g_\phi$ to select a subset $\mathcal{D}_t \subset \mathcal{D}_t^{\text{candidate}}$
        \item Estimate next parameters: $\widehat{\bm{\theta}}_{t+1} = \mathcal{M}(\mathcal{D}_t)$
        \item Update filter parameters $\phi$ to enforce contraction conditions
    \end{itemize}
\end{enumerate}

Our core innovation is to model the error dynamics and enforce a \textbf{contraction condition}. 
When this condition is met, our theoretical analysis (Theorems \ref{thm1} and \ref{thm3}) guarantees that the error process converges in probability. This holds \textbf{even when the sample size $n_t$ is not superlinear}, effectively breaking the superlinear growth requirement established by \cite{xu2024probabilistic}.

\begin{figure}[ht]
\vskip 0.2in
\begin{center}
\centerline{\includegraphics[width=\columnwidth]{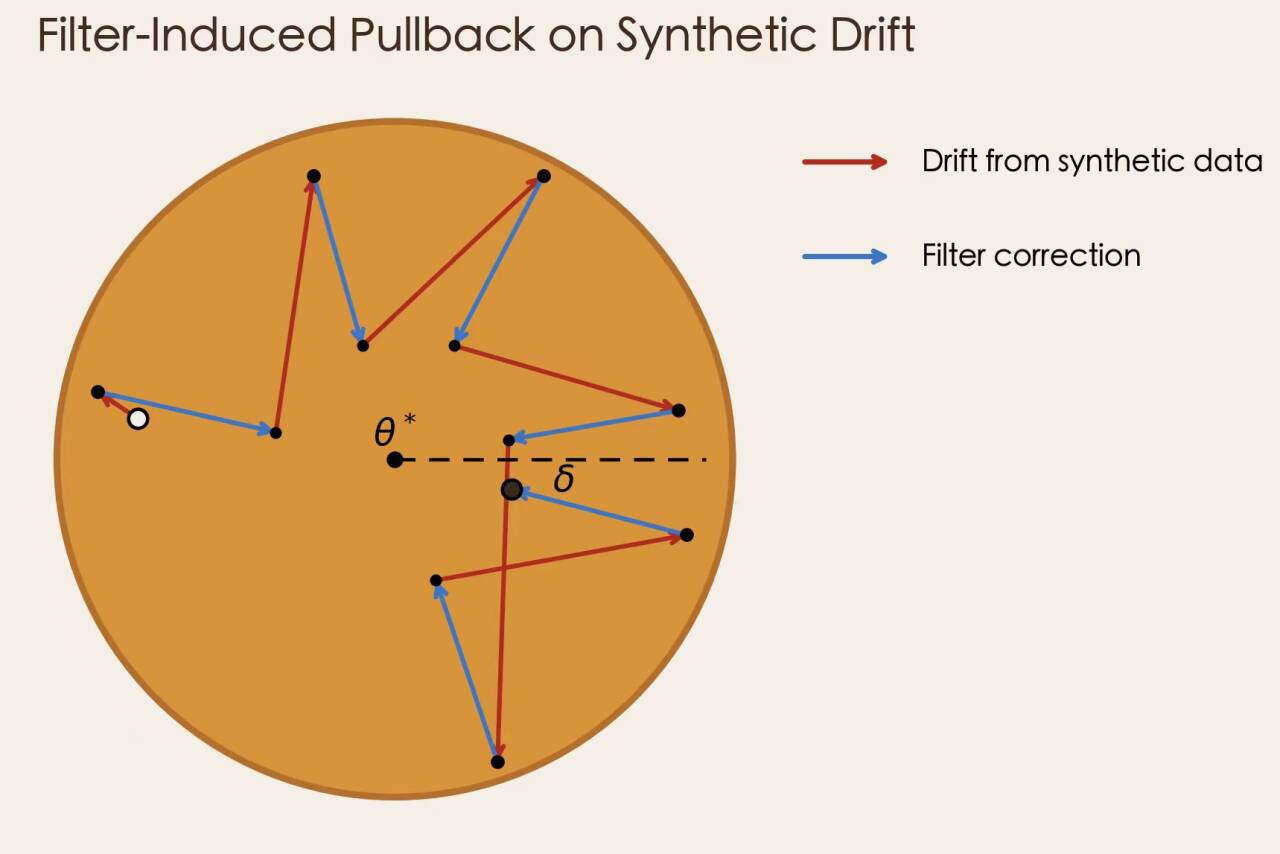}}
\caption{This figure explains the working mechanism of the filter: by continuously pulling the parameters back to the origin, it prevents the model from collapsing.}
\label{pb}
\end{center}
\vskip -0.2in
\end{figure}

As illustrated in Figure~\ref{pb}, our filter acts as a guidance system, continually pulling the wandering parameter estimate back towards the basin of attraction around $\bm{\theta}^*$.

\textbf{Methodology:}
After we establish the mathematical theory under some condition, we give a method to design a neural network as the filter which can force the filter satisfies the condition. We first label the data in the first or the initial few steps by human or by machine, then we use these data to train the filter which make the filter force to the condition under the loss function constraint.

\subsection{Summary of Contributions}

Our work bridges the gap between the theoretical understanding of model collapse and a practical, deployable solution. The main contributions are:

\begin{itemize}
    \item \textbf{A New Theoretical Framework:} We introduce a contraction-operator-based theory that provides a sufficient condition for preventing model collapse with \textbf{constant sample sizes}, moving beyond the superlinear growth requirement of \cite{xu2024probabilistic}.
   
    \item \textbf{An End-to-End Algorithmic Solution:} We design a specialized neural network architecture that acts as a data filter. This network is trained with a novel combined loss function to actively \textbf{learn the contraction conditions} required by our theory, ensuring theoretical guarantees are met in practice.
   
    \item \textbf{Integration of Theory and Practice:} We provide a complete framework that translates theoretical stability conditions into a learnable objective for a neural network, ensuring our method is both principled and practical.
   
    \item \textbf{Empirical Validation:} We demonstrate through experiments that our neural filter successfully prevents model collapse in exponential family distributions under fixed sample size settings, validating its effectiveness and superiority.
\end{itemize}

\section{Mathematical Modeling and Deduction}
\label{sec:methodology}

\subsection{Mathematical Modeling}
\label{subsec:modeling}
We aim to train a filter to collect data to avoid model collapse of unbiased estimation, which has a function to pull the point back to the origin, which is abstracted as the map $B$.

Consider the nonlinear stochastic difference system:
\begin{equation}
    \mathbf{e}_{t+1} = A(\mathbf{e}_t)\mathbf{e}_t + \boldsymbol{\xi}_t', \quad \text{where } A(\mathbf{e}_t) = I - B(\mathbf{e}_t)\label{eq1}
\end{equation}

and \(B(\mathbf{e}_t)\) is a state-dependent contraction operator.

We make the following assumptions to characterize the role of filters in stochastic systems with model collapse.
{\bf We use the error generated by the estimator to produce the random walk:}
  \begin{assumption}\label{as2}
        { (Noise Properties)}: \\
    The noise process \(\{\boldsymbol{\xi}_t'\}\) satisfies
    \begin{enumerate}
        \item  \(\mathbb{E}[\boldsymbol{\xi}_t' | \mathcal{F}_t] = 0\), where \(\mathcal{F}_t = \sigma(\mathbf{e}_0, \boldsymbol{\xi}_0', \ldots, \boldsymbol{\xi}_{t-1}')\) is the filtration.
        \item  there exists a sequence \(\sigma^2_t > 0\) and $\lim\limits_{t\to\infty}\sigma^2_{t}=0$ such that \(\mathbb{E}[\boldsymbol{\xi}_t'^T P \boldsymbol{\xi}_t' | \mathcal{F}_t] \leq \sigma_t^2\) almost surely.
    \end{enumerate}
    \end{assumption}
    \begin{remark}
        Actually this is reasonable, by \cite{xu2024probabilistic}, when the sample growth (here the growth can be in any speed), the upper bound of the step size in the random walk of parameter will turn to zero.
    \end{remark}
    {\bf We use a functional $c$ to control the map $A$:}
    \begin{assumption}{ (contraction condition)} \label{as1}

        There exists a symmetric positive definite matrix \(P \succ 0\) and a continuous function \(c: \mathbb{R}^p \to [0,1)\) such that:
    \[
    A(\mathbf{e})^T P A(\mathbf{e}) \preceq (1 - c(\mathbf{e})) P, \quad \forall \mathbf{e} \in \mathbb{R}^p
    \]
    \end{assumption} 
    \begin{remark}
        The function $C$ can be seen as the shrinking strength, where the strength increases with the increase of error seen in figure~\ref{sf}. 
    \end{remark}    
    \begin{figure}[ht]
\vskip 0.2in
\begin{center}
\centerline{\includegraphics[width=\columnwidth]{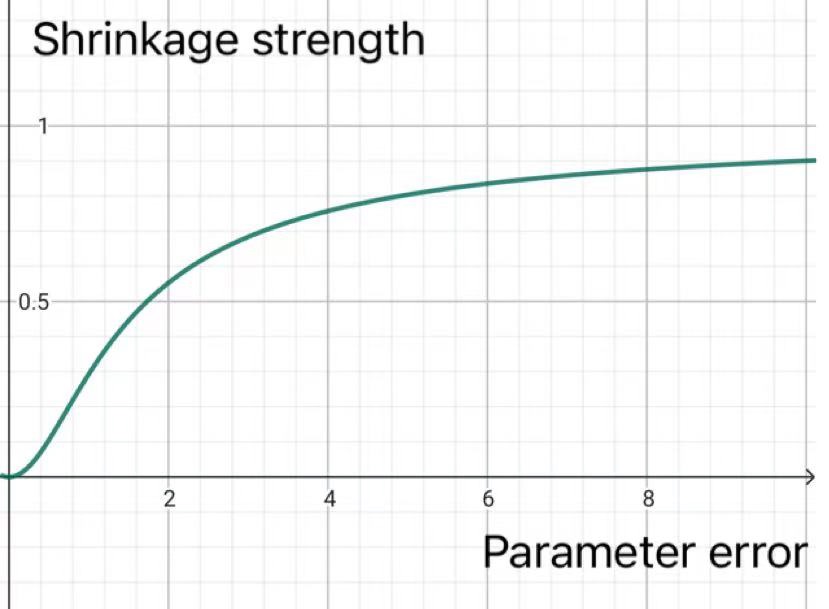}}
\caption{This figure show the relationship between the contraction strength and the vector distance from the origin which is the parameter error.}
\label{sf}
\end{center}
\vskip -0.2in
\end{figure}
      {\bf We introduce a convex function $\fu$ to regulate the contraction function $c$ which can ensure that the farther away from the origin, the stronger the pulling back effect:}
\begin{assumption}\label{as3}
    {(Properties of the Contraction Function)}: 
    
        There exists a convex function \(\fu: \mathbb{R}_{\geq 0} \to \mathbb{R}_{\geq 0}\) with \(\fu(0) = 0\), \(\fu(r) > 0\) for \(r > 0\), and
        $$c(\mathbf{e}) V(\mathbf{e}) \geq \fu(V(\mathbf{e}))
        ,$$
        here $V(\mathbf{e}):=\mathbf{e}^TP\mathbf{e}$ is the Lyapunov function.
\end{assumption}
The function $c$ and $f$ are exist,

\textbf{Example:}
     We can just take $c:\mathbf{e}\mapsto 1-(V(\mathbf{e})+1)^{-\frac{1}{2}}$ and $\fu:r\mapsto r(1-(r+1)^{-\frac{1}{2}})$ can satisfies the condition. 
\begin{remark}
    Here the function $f$ is used to regular the function $C$, and we need not to worry about the wether the condition can be satisfied, we will construct a loss function to make the filter converge the condition in section \ref{ls}.
\end{remark}

\subsection{Main Theoretical Results}
Here we prove that this random system with a pullback effect, when the square of the norm of the step length of the random walk is controlled by a sequence that converges to 0, will eventually enter any neighborhood of the origin.
\begin{theorem}\label{thm1}
If a stochastic difference system~\ref{eq1} satisfies Assumption~\ref{as1}, ~\ref{as2} and ~\ref{as3}, then for each $\delta>0$    $\limsup\limits_{t \to \infty} \mathbb{P}(\|\mathbf{e}_t\| > \delta)=0$ \label{hl}, {\bf which means the model will not collapse}.
\end{theorem}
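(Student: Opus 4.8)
The plan is to use $V(\mathbf{e}) = \mathbf{e}^{T} P \mathbf{e}$ from Assumption~\ref{as3} as a stochastic Lyapunov potential and to control its expectation. First I would compute the one-step conditional drift. Since $\mathbf{e}_t$, and hence $A(\mathbf{e}_t)$, is $\mathcal{F}_t$-measurable, expanding $V(\mathbf{e}_{t+1})$ via the dynamics~\eqref{eq1} and applying $\mathbb{E}[\,\cdot\mid\mathcal{F}_t]$ annihilates the cross term $2\mathbf{e}_t^{T}A(\mathbf{e}_t)^{T}P\,\mathbb{E}[\boldsymbol{\xi}_t'\mid\mathcal{F}_t]$ by the martingale-difference property in Assumption~\ref{as2}(1). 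The quadratic term is controlled by Assumption~\ref{as1} as $\mathbf{e}_t^{T}A(\mathbf{e}_t)^{T}PA(\mathbf{e}_t)\mathbf{e}_t \le (1-c(\mathbf{e}_t))V(\mathbf{e}_t)$, and the noise term by Assumption~\ref{as2}(2) as $\mathbb{E}[\boldsymbol{\xi}_t'^{T}P\boldsymbol{\xi}_t'\mid\mathcal{F}_t]\le\sigma_t^2$. Writing $V_t := V(\mathbf{e}_t)$ and invoking $c(\mathbf{e})V(\mathbf{e})\ge f(V(\mathbf{e}))$ from Assumption~\ref{as3}, this yields the key drift inequality
\[
\mathbb{E}[V_{t+1}\mid\mathcal{F}_t] \;\le\; V_t - f(V_t) + \sigma_t^2 .
\]

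Next I would pass to expectations. Setting $v_t := \mathbb{E}[V_t]$ (finite for every $t$ by induction from $\mathbb{E}[V_0]<\infty$, since dropping the negative term gives $v_{t+1}\le v_t+\sigma_t^2$) and taking full expectations, the convexity of $f$ lets me apply Jensen's inequality, $\mathbb{E}[f(V_t)]\ge f(\mathbb{E}[V_t])=f(v_t)$, collapsing the stochastic recursion into a purely deterministic scalar one:
\[
v_{t+1} \;\le\; v_t - f(v_t) + \sigma_t^2, \qquad v_t\ge 0 .
\]

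The heart of the argument, and what I expect to be the main obstacle, is showing $v_t\to 0$ using only $\sigma_t^2\to 0$ and \emph{not} $\sum_t\sigma_t^2<\infty$; it is precisely this missing summability that makes the standard Robbins--Siegmund supermartingale route (which would deliver almost-sure convergence) unavailable and confines us to convergence in probability. I would first observe that convexity together with $f(0)=0$ and $f(r)>0$ for $r>0$ forces $f$ to be strictly increasing on $(0,\infty)$. Then a confinement argument works: fix $\epsilon>0$, set $\eta=\epsilon/2$, and choose $T$ so that $\sigma_t^2\le\min\{f(\eta)/2,\ \epsilon/2\}$ for all $t\ge T$. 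Whenever $v_t\ge\eta$ the drift obeys $-f(v_t)+\sigma_t^2\le -f(\eta)/2<0$, so $v_t$ strictly decreases by at least $f(\eta)/2$ per step and must fall below $\eta$ in finitely many steps; once below, an induction shows $v_t\le\eta+\sup_{s\ge T}\sigma_s^2\le\epsilon$ for all subsequent $t$, because any crossing above $\eta$ triggers a pullback $f(\eta)$ that dominates the upward push $\sigma_t^2\le f(\eta)$. Hence $\limsup_t v_t\le\epsilon$ for every $\epsilon>0$, giving $v_t\to 0$.

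Finally I would translate back to $\mathbf{e}_t$. Since $P\succ 0$ we have $V_t\ge\lambda_{\min}(P)\|\mathbf{e}_t\|^2$, so $\{\|\mathbf{e}_t\|>\delta\}\subseteq\{V_t>\lambda_{\min}(P)\delta^2\}$, and Markov's inequality gives $\mathbb{P}(\|\mathbf{e}_t\|>\delta)\le v_t/(\lambda_{\min}(P)\delta^2)\to 0$, which is exactly $\limsup_{t\to\infty}\mathbb{P}(\|\mathbf{e}_t\|>\delta)=0$. The only genuinely delicate step is the deterministic lemma of the previous paragraph; the rest is a direct substitution of the three assumptions into the Lyapunov drift and a routine application of Jensen and Markov.
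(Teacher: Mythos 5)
Your proposal is correct and follows essentially the same route as the paper's proof: bound the one-step Lyapunov drift of $V(\mathbf{e}_t)=\mathbf{e}_t^{T}P\mathbf{e}_t$ using Assumptions~\ref{as1}--\ref{as3}, apply Jensen's inequality to the convex $f$ to obtain the deterministic recursion $v_{t+1}\le v_t-f(v_t)+\sigma_t^2$, show its limit superior is forced to zero as the noise bound vanishes, and finish with Markov's inequality via $\lambda_{\min}(P)$. The only differences are cosmetic: you prove the deterministic recursion lemma by a direct descent-and-confinement argument (and spell out the cross-term cancellation that the paper glosses over), whereas the paper freezes $\sigma_t^2\le\varepsilon$ and argues by contradiction along a subsequence, using the monotonicity of $f$ to send $f^{-1}(\varepsilon)\to 0$.
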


\subsection{The Estimation of Convergence Rate}
We make the following assumptions on the growth of $\fu$ and the convergence rate of noise, in order to provide the convergence rate of this stochastic system with a pullback effect.

\begin{assumption}[Function Properties]
\label{assump:function}
The function $f: \mathbb{R}_{\geq 0} \to \mathbb{R}_{\geq 0}$ satisfies:
 There exist constants $c_1, c_2 > 0$ and $x_0 > 0$ such that for all $0 < x \leq x_0$: $$c_1 x^p \leq f(x) \leq c_2 x^p .$$
\end{assumption}

\begin{assumption}[Noise Properties]
\label{assump:noise}
The noise sequence $\{\sigma_t^2\}$ satisfies:

 $$\sigma_t^2 = O(t^{-\beta})$$ for some $\beta > 0$.
\end{assumption}

\begin{theorem}[Convergence Rates for Recurrence Inequality]
\label{thm2}
Under Assumptions \ref{assump:function} and \ref{assump:noise}, the sequence $\{x_t\}$ satisfies the following convergence rates:

1. If $p = 1$, then $x_t = O\left(\max\left(e^{-ct}, t^{-\beta}\right)\right)$ for some $c > 0$, where $c = -\log(1-c_1) > 0$.

2. If $p > 1$, then $x_t = O\left(\max\left(t^{-\frac{1}{p-1}}, t^{-\frac{\beta}{p}}\right)\right)$
\end{theorem}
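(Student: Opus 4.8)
The plan is to analyze the deterministic recurrence inequality $x_{t+1} \le x_t - f(x_t) + \sigma_t^2$, which is the scalar comparison relation one obtains for $x_t := \E[V(\mathbf{e}_t)]$ by taking conditional expectations in \eqref{eq1}, applying Assumptions~\ref{as1} and~\ref{as2}, and invoking Jensen's inequality together with the convexity of $f$ (Assumption~\ref{as3}). Two preliminary reductions come first. I would show $x_t \ge 0$ and $x_t \to 0$: since $\sigma_t^2 \to 0$ and $f(x) > 0$ for $x > 0$, whenever $x_t$ stays bounded away from $0$ the drift $-f(x_t)$ eventually dominates $\sigma_t^2$ and forces a strict decrease, so the sequence cannot remain outside any neighborhood of $0$. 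Consequently there is an index $t_0$ after which $x_t \le x_0$, so the two-sided bound $c_1 x^p \le f(x) \le c_2 x^p$ of Assumption~\ref{assump:function} applies for all $t \ge t_0$; I restrict attention to $t \ge t_0$ throughout.

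For $p = 1$ the recurrence linearizes to $x_{t+1} \le (1-c_1)x_t + \sigma_t^2$, and unrolling from $t_0$ gives $x_t \le (1-c_1)^{t-t_0} x_{t_0} + \sum_{k=t_0}^{t-1}(1-c_1)^{t-1-k}\sigma_k^2$. The homogeneous term is exactly $e^{-c(t-t_0)}$ with $c = -\log(1-c_1) > 0$. For the convolution term I would split the sum at $k = t/2$: on $k \ge t/2$ the noise factor is $\sigma_k^2 = O(t^{-\beta})$ while the geometric kernel sums to $O(1)$, and on $k < t/2$ the kernel is bounded by $(1-c_1)^{t/2} = O(e^{-ct/2})$, which beats the at-most-polynomial partial sum of the $\sigma_k^2$. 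This yields $O(t^{-\beta})$ for the particular part, hence $x_t = O(\max(e^{-ct}, t^{-\beta}))$.

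For $p > 1$ the heart of the argument is an induction with the ansatz $x_t \le M t^{-\gamma}$, where $\gamma := \min\big(\tfrac{1}{p-1}, \tfrac{\beta}{p}\big)$ and $M$ is a constant to be fixed. Using that $\phi(x) := x - c_1 x^p$ is increasing on a neighborhood of $0$, the inductive hypothesis gives $x_{t+1} \le \phi(M t^{-\gamma}) + \sigma_t^2 = M t^{-\gamma} - c_1 M^p t^{-\gamma p} + O(t^{-\beta})$, and I would compare this against the target $M(t+1)^{-\gamma} = M t^{-\gamma} - M\gamma t^{-\gamma-1} + O(t^{-\gamma-2})$. The required inequality reduces, up to lower-order corrections, to $M\gamma\, t^{-\gamma - 1} + O(t^{-\beta}) \le c_1 M^p\, t^{-\gamma p}$, and the choice of $\gamma$ is precisely what balances the two regimes: when $\gamma = \tfrac{1}{p-1}$ one has $\gamma p = \gamma + 1$ and $\beta \ge \gamma p$, so the dominant terms share the exponent $t^{-\gamma-1}$ and the condition becomes $M\gamma \le c_1 M^p$; when $\gamma = \tfrac{\beta}{p}$ one has $\gamma p = \beta < \gamma+1$, so the noise term $t^{-\beta}$ matches the drift term $t^{-\gamma p}$ and the condition becomes $C_\sigma \le c_1 M^p$, where $C_\sigma$ is the constant in $\sigma_t^2 = O(t^{-\beta})$. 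In either case a sufficiently large $M$ closes the step, and enlarging $M$ further to dominate $x_{t_0} t_0^{\gamma}$ secures the base case, giving $x_t = O(t^{-\gamma}) = O(\max(t^{-1/(p-1)}, t^{-\beta/p}))$.

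The main obstacle I anticipate is the $p > 1$ induction: one must track the subleading terms carefully enough to see that the $t^{-\gamma-1}$ correction coming from $(t+1)^{-\gamma}$ is absorbed by the drift in the drift-dominated regime and is harmless in the noise-dominated regime, and one must verify that a single exponent $\gamma = \min(\cdot)$ suffices rather than requiring a delicate crossover analysis between the two regimes. The preliminary claim $x_t \to 0$ (so that the polynomial sandwich for $f$ becomes usable) also needs a clean deterministic argument, since Assumption~\ref{assump:function} only controls $f$ near the origin.
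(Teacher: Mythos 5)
Your proposal is correct and follows essentially the same route as the paper: reduce to the scalar recurrence, use $x_t\to 0$ to activate the polynomial sandwich $c_1x^p\le f(x)\le c_2x^p$, handle $p=1$ by unrolling and splitting the convolution sum at $k=t/2$, and handle $p>1$ by induction on the ansatz $Mt^{-\gamma}$ with $\gamma=\min\bigl(\tfrac{1}{p-1},\tfrac{\beta}{p}\bigr)$ and the same two-case comparison of exponents. Your explicit appeal to the monotonicity of $x\mapsto x-c_1x^p$ near the origin in the inductive step is a small point of extra care that the paper leaves implicit.
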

\subsection{The Combination with the Work of Predecessors}
Here we use the assumption and consequence from \cite{xu2024probabilistic} to prove that when the filter exists, sample size $n_t$ need not grow superlinearly. 
\begin{assumption}\cite{xu2024probabilistic}\label{initass}
For a class of parametric generative models $\mathcal{P}=\{\mathbb{P}_{\theta}:\theta\in\Theta\}$,
let $\mathcal{D}=\{\boldsymbol{x}_{i}\}_{i=1}^{t}$ be a dataset generated from $\mathbb{P}_{\theta}$
for some $\theta\in\Theta$. Suppose that $\widehat{\boldsymbol{\theta}}=\mathcal{M}(\mathcal{D})$
is an estimate of $\boldsymbol{\theta}$ obtained under the estimation scheme $\mathcal{M}$.
Assume there exist constants $C_{1},C_{2},\gamma>0$ and a positive diverging sequence $r(t)$
such that for all $t\geq 1$ and any $\delta>0$, we have
\begin{align*}
\sup_{\theta\in\Theta}\mathbb{P}\left(\|\widehat{\boldsymbol{\theta}}-\boldsymbol{\theta}\|_{2}\geq\delta\right)
\leq C_{1}\exp(-C_{2}r(t)\delta^{\gamma}),
\end{align*}
where the probability is taken over the randomness of $\mathcal{D}$ generated i.i.d. from $\mathbb{P}_{\theta}(\boldsymbol{x})$.
\end{assumption}

\begin{theorem}\label{thm3}
   Suppose that $\widehat{\boldsymbol{\theta}}=\mathcal{M}(\mathcal{D})$ is an estimate of
$\boldsymbol{\theta}$ with $\mathcal{D}=\{\boldsymbol{x}_{i}\}_{i=1}^{t}\sim\mathbb{P}_{\boldsymbol{\theta}}$
satisfying Assumption \ref{initass} with $r(t)=t^{\kappa}$, where $\kappa>0$. Then for each $\delta>0$, $$\limsup\limits_{t \to \infty} \mathbb{P}(\|\mathbf{e}_t\| > \delta)=0,$$
{\bf which means the model will not collapse}.
\end{theorem}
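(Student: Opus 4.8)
The plan is to reduce Theorem~\ref{thm3} to Theorem~\ref{thm1} by showing that the concentration bound of Assumption~\ref{initass} is exactly what certifies the noise condition Assumption~\ref{as2}; the two contraction hypotheses, Assumptions~\ref{as1} and~\ref{as3}, are supplied by the learned filter $g_\phi$ and are taken to hold throughout. Concretely, I would write the error recursion~\eqref{eq1} with $\be_t = \bhattheta_t - \btheta^*$ and identify the driving noise as the centered one-step estimation fluctuation $\bxi_t' = \bhattheta_{t+1} - \E[\bhattheta_{t+1}\mid\mathcal{F}_t]$. Since $\cD_t$ is generated afresh from $\mathbb{P}_{\bhattheta_t}$ and $\mathcal{M}$ is unbiased, the conditional mean vanishes, which is part~1 of Assumption~\ref{as2}; it then remains only to produce a conditional second-moment bound $\E[\bxi_t'^{\top} P \bxi_t' \mid \mathcal{F}_t] \le \sigma_t^2$ with $\sigma_t^2 \to 0$.

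The key quantitative step converts the exponential tail of Assumption~\ref{initass} into that second-moment bound. Applying the bound conditionally to the fresh sample $\cD_t$, whose generating parameter $\bhattheta_t$ is $\mathcal{F}_t$-measurable, gives
\[
\prob\!\left(\|\bxi_t'\| \ge \delta \,\middle|\, \mathcal{F}_t\right) \le C_1 \exp\!\left(-C_2\, r(t)\, \delta^{\gamma}\right).
\]
Using the layer-cake identity $\E[\|\bxi_t'\|^2\mid\mathcal{F}_t] = \int_0^\infty \prob(\|\bxi_t'\|^2 \ge u \mid \mathcal{F}_t)\,du$ together with $\bxi_t'^{\top} P \bxi_t' \le \lambda_{\max}(P)\|\bxi_t'\|^2$ and the substitution $\delta = \sqrt{u}$, the resulting integral evaluates, via the Gamma function, to order $r(t)^{-2/\gamma}$. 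With $r(t) = t^{\kappa}$ this yields $\sigma_t^2 = O\!\left(t^{-2\kappa/\gamma}\right) \to 0$, which is precisely part~2 of Assumption~\ref{as2}. All three hypotheses of Theorem~\ref{thm1} then hold, and invoking it delivers $\limsup_{t\to\infty}\prob(\|\be_t\|>\delta)=0$.

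The main obstacle is the passage from the \emph{uniform-in-$\theta$, marginal} statement of Assumption~\ref{initass} to a \emph{conditional} tail bound given $\mathcal{F}_t$. This requires arguing that, conditioned on $\mathcal{F}_t$, the data $\cD_t$ is an i.i.d.\ draw from $\mathbb{P}_{\bhattheta_t}$ independent of the past, so that the supremum $\sup_{\theta\in\Theta}$ in Assumption~\ref{initass} absorbs the random but $\mathcal{F}_t$-measurable value $\bhattheta_t$; the uniformity over $\theta$ is exactly what legitimizes this substitution. A secondary subtlety is the unbiasedness used to kill the conditional mean: if $\mathcal{M}$ is only asymptotically unbiased, one must carry a vanishing bias term and absorb it into the drift $A(\be_t)\be_t$ rather than the noise, which does not change the conclusion but must be tracked. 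Once these measurability and centering points are settled, the remaining tail-to-moment computation is routine.
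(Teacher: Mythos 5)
Your proposal follows essentially the same route as the paper: both convert the exponential tail of Assumption~\ref{initass} into a moment bound of order $t^{-p\kappa/\gamma}$ via a Gamma-function (layer-cake) integral, use this with $p=2$ to verify the noise condition of Assumption~\ref{as2}, and then invoke Theorem~\ref{thm1}. Your write-up is in fact more careful than the paper's, which states the moment lemma and immediately concludes without addressing the conditional-versus-marginal tail bound or the $\mathcal{F}_t$-measurability of the generating parameter that you correctly flag as the main subtlety.
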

\begin{remark}
    Theorem~\ref{thm3} tell us that when we have a filter which satisfies Assumption~\ref{as1} and~\ref{as3}, then we only need a lower growth speed thar can make sure the model would not  collapse.
\end{remark}

\section{Neural Architecture Design}
\label{subsec:architecture}
We propose a neural network-based filter designed to select data points for parameter estimation in exponential family distributions. When we constructed $c,\fu,P$ in Assumption~\ref{as1} and \ref{as3}, the filter is trained via gradient descent to satisfy  contraction condition (Assumption~\ref{as1}) on the estimation error while maintaining classification accuracy against human labels. Our solution includes the network architecture, loss functions, and training procedure.. 
\newcommand{\bt}{\{\bm{x}_{\text{est},i}\}_{i=1}^{n}}
\newcommand{\tte}{\hat{\theta}_\text{est}}
 \newcommand{\bg}{\{\bm{x}_{\text{good},i}\}_{i=1}^{m}}

Let $\mathbb{P}_{\theta}$ be a distribution and we have samples $\bt$ and a unbiased estimation $\mathcal{M}$.  Define $\tte:=\mathcal{M}(\bt)$.

For example for exponential family distributions
\begin{equation}
    f(x|\theta) = h(x) \exp\left(\theta^T T(x) - \Phi(\theta)\right) \label{exp}
\end{equation}

where $h(x)$ is the base measure, $T(x)$ is the sufficient statistic, and $\Phi(\theta)$ the log-partition function.
we can use 
 \[
        \hat{\theta}_\text{est} = (\nabla \Phi)^{-1}(\bar{T}_\text{est})
        \]
to approximate its parameters, where 
        \[
        \bar{T}_\text{est} = \frac{1}{n} \sum_{i=1}^n T(x_i)
        \]
is the sufficient statistic for the sample.

\subsection{Problem Statement}

Given a dataset \(D = \{x_i\}_{i=1}^N\) from an distribution with unknown true parameter \(\theta_{\text{true}}\), we design a filter \(g_\phi(\cdot)\) :
\begin{enumerate}
    \item Outputs selection probabilities \(w_i \in [0,1]\) for each data point based on (PCA-reduced features \(z_i\), if we use PCA)  data component.
    \item Satisfies the contraction condition:
    \begin{equation}\label{eq:contract}
        \mathbf{e}_{\text{new}}^\top P \mathbf{e}_{\text{new}} \leq (1 - c(\mathbf{e}_{\text{est}})) \mathbf{e}_{\text{est}}^\top P \mathbf{e}_{\text{est}}
    \end{equation}
    where \(\mathbf{e}_{\text{est}} = \theta_{\text{est}} - \theta_{\text{good}}\), \(\mathbf{e}_{\text{new}} = \theta_{\text{new}} - \theta_{\text{good}}\), \(P \succ 0\) is a symmetric positive definite matrix, and \(c(\mathbf{e})\) is a continuous function.
    \item Minimizes classification error against human-provided labels \(y_i \in \{0,1\}\) (0: bad sample, 1: good sample).
\end{enumerate}

\textbf{Key Components and Assumptions}
\begin{enumerate}
\item {Data Preparation}
\begin{itemize}
    \item Dataset: \(D = \{x_i\}_{i=1}^N\) with labels \(y_i\) which are labeled as good sample. (The label can be formed from human or the machine. In our experiment we use machine to label sample.)
    \item We use PCA to raw features to obtain reduced-dimensional features \(z_i \in \mathbb{R}^d\).
\end{itemize}

\item {Parameter Estimation}
\begin{itemize}
    \item \textbf{Initial estimate} \(\theta_{\text{est}}\):
    \[
    \theta_{\text{est}} = \mathcal{M}(\bt)
    \]
    \item \textbf{Reference estimate} \(\theta_{\text{good}}\) (using good samples only):
    \begin{align*}
 \theta_{\text{good}} = \mathcal{M}(\bg)
    \end{align*}
    \item \textbf{Error vectors}: \(\mathbf{e}_{\text{est}} = \theta_{\text{est}} - \theta_{\text{good}}\), \(\mathbf{e}_{\text{new}} = \theta_{\text{new}} - \theta_{\text{good}}\)
\end{itemize}

\item {Filter Operation}
The filter defines a \textit{pullback operator} \(B\) implicitly through data selection:
\[
A = I - B, \quad \mathbf{e}_{\text{new}} = A \mathbf{e}_{\text{est}}
\]
where \(A\) is the matrix satisfying the contraction condition \eqref{eq:contract}.
\end{enumerate}

\subsection{ Neural Architecture Design}\label{nnd}
\begin{itemize}
    \item \textbf{Architecture}: Multi-layer Perceptron (MLP)
    \item \textbf{Input}: Data or PCA-reduced features \(z_i \in \mathbb{R}^d\)
    \item \textbf{Hidden layers}: \(\geq 1\) layer with ReLU activation (e.g., dimension 128)
    \item \textbf{Output}: Single node with sigmoid activation \(\rightarrow w_i = g_\phi(z_i)\)
    \item \textbf{Parameters}: Weights and biases denoted as \(\phi\)
\end{itemize}

\textbf{Loss Function}
\label{ls}
Total loss combines classification and contraction objectives:
\begin{equation}\label{eq:total_loss}
L_{\text{total}}(\phi) = L_{\text{class}}(\phi) + \lambda L_{\text{contract}}(\phi)
\end{equation}
where \(\lambda > 0\) is a hyperparameter.
\begin{enumerate}
    \item Classification Loss
Binary cross-entropy with human labels:
\small{
\begin{equation}
L_{\text{class}}(\phi) = -\frac{1}{N} \sum_{i=1}^N \Big[ y_i \log(g_\phi(z_i)) + (1 - y_i) \log(1 - g_\phi(z_i)) \Big]
\end{equation}}

\item {Contraction Loss}
Hinge loss enforcing \eqref{eq:contract}:
\begin{equation}
L_{\text{contract}}(\phi) = \max\left(0, \, \mathbf{e}_{\text{new}}^\top P \mathbf{e}_{\text{new}} - (1 - c(\mathbf{e}_{\text{est}})) \mathbf{e}_{\text{est}}^\top P \mathbf{e}_{\text{est}} \right)
\end{equation}
where:
\begin{itemize}
    \item \(\mathbf{e}_{\text{est}}\) is fixed during training
    \item \(\mathbf{e}_{\text{new}}\) depends on \(\phi\) via \(\theta_{\text{new}}\)
    \item \(c(\mathbf{e})\) is predefined (e.g., \(c(\mathbf{e}) = \alpha \|e\|^2_2\))
\end{itemize}
\end{enumerate}

\subsection{Training Procedure}
We train the network as follows, which is flexible and depend on the design of researcher. Here we use the exponential family distributions for example.

\begin{algorithm}[H]
\caption{Filter Training}
\begin{algorithmic}[1]
\STATE \textbf{Initialize:}
    \begin{itemize}
        \item Compute \(\theta_{\text{est}}\), \(\theta_{\text{good}}\) from \(D\)
        \item Extract PCA features \(z_i\) for all \(x_i\)
        \item Initialize filter parameters \(\phi\)
    \end{itemize}
\FOR{epoch = 1 to MaxEpochs}
    \STATE \textbf{Forward Pass:}
        \begin{itemize}
            \item Compute weights: \(w_i = g_\phi(z_i)\)
            \item Compute weighted sufficient statistic:
            \[
            \bar{T}_{\text{new}} = \frac{\sum_{i=1}^N w_i T(x_i)}{\sum_{i=1}^N w_i}
            \]
            \item Solve for \(\theta_{\text{new}}\):
            \[
            \theta_{\text{new}} = (\nabla \Phi)^{-1}(\bar{T}_{\text{new}}) \quad \text{(analytic/numerical)}
            \]
            \item Compute error: \(\mathbf{e}_{\text{new}} = \theta_{\text{new}} - \theta_{\text{good}}\)
        \end{itemize}
    \STATE \textbf{Loss Calculation:}
        \[
        L_{\text{total}} = L_{\text{class}} + \lambda L_{\text{contract}}
        \]
    \STATE \textbf{Backward Pass:}
        \[
        \phi \leftarrow \phi - \eta \nabla_\phi L_{\text{total}} \quad \text{(via Adam optimizer)}
        \]
\ENDFOR
\end{algorithmic}
\end{algorithm}

\subsection{Convergence Analysis}
\textbf{Existence of Solution}
Assume:
\begin{itemize}
    \item \(\exists \phi^*\) such that \(L_{\text{class}}(\phi^*) = 0\) and \(L_{\text{contract}}(\phi^*) = 0\)
    \item Neural network has sufficient capacity (Universal Approximation Theorem)
\end{itemize}
Then gradient descent may converge to \(\phi^*\) satisfying the contraction condition.

\textbf{Optimization Convergence}
\begin{itemize}
    \item \(L_{\text{total}} \geq 0\) but non-convex \(\rightarrow\) converges to local minimum
    \item \(L_{\text{contract}}\) provides gradient when contraction is violated:
    \[
    \nabla_\phi L_{\text{contract}} \propto \nabla_\phi \left( \mathbf{e}_{\text{new}}^\top P \mathbf{e}_{\text{new}} \right) \quad \text{if } L_{\text{contract}} > 0
    \]
    \item Monitor \(L_{\text{contract}} \to 0\) during training
\end{itemize}

\section{Experiments}
\label{sec:experiments}
We design experiments to show the effect of filter, we use a neural network to collect data so that we can have enough sample to train the filter. we will compare the offset of parameters between whit and with out filter in three kind of sample growth rate.

\subsection{Experimental Setup}
\label{subsec:setup}
Now we have a set of initial data generated by a normal distribution.

 We first simulate the workflow for several step and we labeled the \(70\%\) of points closest to the true parameters as ‘good’ and the remainder are marked  ‘bad’ . Then we take these data to training the filter. finally we add the filter into the workflow and recode the {\bf expected distance $ \mathbb{E}[(\widehat{\bm{\theta}}_{T} - \bm{\theta}^*)^2]$, between the parameters and the true parameters} when the sample grows. 

\subsection{Training data}
The training data used in this study was entirely generated by simulation, aiming to reproduce the sample quality degradation caused by distribution drift during the iterative parameter estimation process. We construct datasets respectively with four dimension configurations (2 and 3 dimensions), and the data of each dimension is based on the same set of default hyperparameters: The true mean vector is initialized to a constant of 1 ($\mu_0=1.0$), the covariance is determined by the identity matrix (standard deviation  $\sigma_0=1.0$), the number of iteration rounds $n_{\text{rounds}}$, the base sample size of each round $n_{\text{samples}}$ , and the contamination rate shows that the filter effectively guides the generated data toward the true distribution.
In the specific process, in the t-round, 1000 candidate points will be sampled around the current estimate, sorted in ascending order by the Euclidean distance from the true mean, and the closest 1- contamination rate will be marked as the "good sample" (label 1). The rest contamination rate is marked as "bad sample" (label 0). Subsequently, only by making good use of the mean update of the samples $\theta_{t+1}$, systematic offsets are injected round by round and cumulative drifts in real scenarios are created.

\subsection{Experimental Process}

After obtaining the data, the merged dataset is standardized and PCA dimensionality reduction is performed. Then, a two-layer MLP (FilterNet) is used to jointly optimize with binary cross-entropy, contraction constraints, and ESS regularization to learn to distinguish between samples that are "helpful for estimation" and those that "cause offset".
After the training was completed, we divided the iterative estimation process under the same initial conditions into two groups of control experiments: "no filter" and "loaded filter". The former directly updated the parameters by averaging all samples, while the latter first weighted the samples through the trained filter and then updated them.

%

\subsection{Results and Analysis}
Figure~\ref{sy2} shows that the filter will pull the generated data to the real case effectively. And we can see that the parameters tend to converge to the true parameters.  

\begin{figure}[ht]
\vskip 0.2in
\begin{center}
\centerline{\includegraphics[width=\columnwidth]{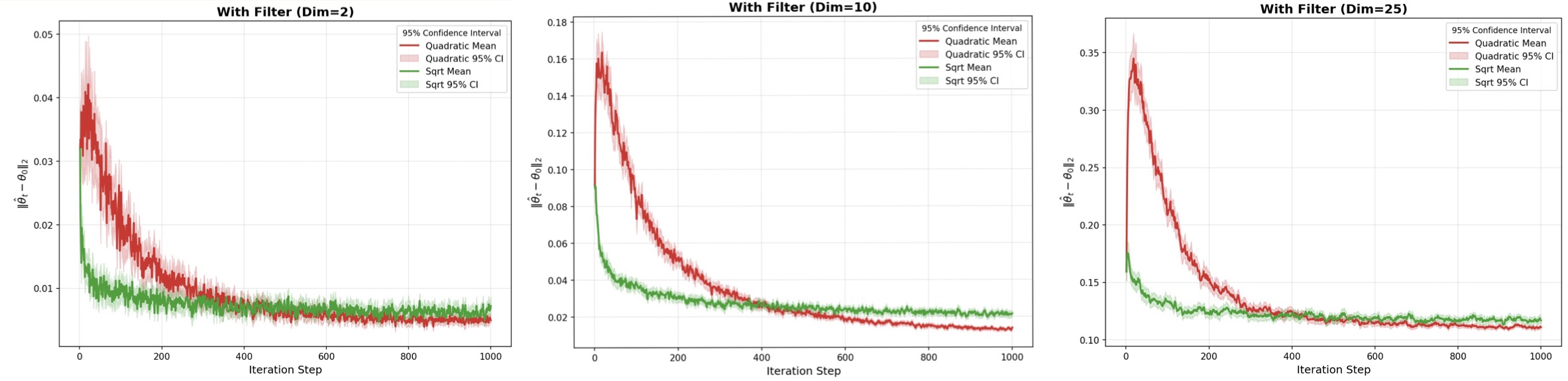}}
\caption{Expectation distance between
the parameters and the true parameters when the sample
grows in $2,10$ and $25$ dimension normal distribution. Here the red line is  the $t^2$ growth and the green line is $t^{\frac{1}{2}}$ growth.}
\label{sy2}
\end{center}
\vskip -0.2in
\end{figure}

\subsection{Ablation Studies}
\label{subsec:ablation}
We compare the {\bf eliminated the dependence on superlinear sample growth and initial parameters} when the sample grows at two different rates, {\bf with and without filters}.

The first row of Figure~\ref{dbsy1-2} shows the behavior of parameter estimates without the use of a filter. Specifically, as the sample size grows at a square root rate, the parameter estimates diverge infinitely away from the true parameter, which is consistent with the theoretical results in \cite{xu2024probabilistic}. 

The second row of Figure~\ref{dbsy1-2} illustrates the effectiveness of the filter in stabilizing the parameter estimates. It reveals that the filter's "pulling effect," which counteracts the divergence of the parameter estimates, becomes more pronounced as the sample size increases. This demonstrates the filter's critical role in ensuring stable and accurate parameter estimation as the data grows.

\begin{figure}[ht]
\vskip 0.2in
\begin{center}
\centerline{\includegraphics[width=\columnwidth]{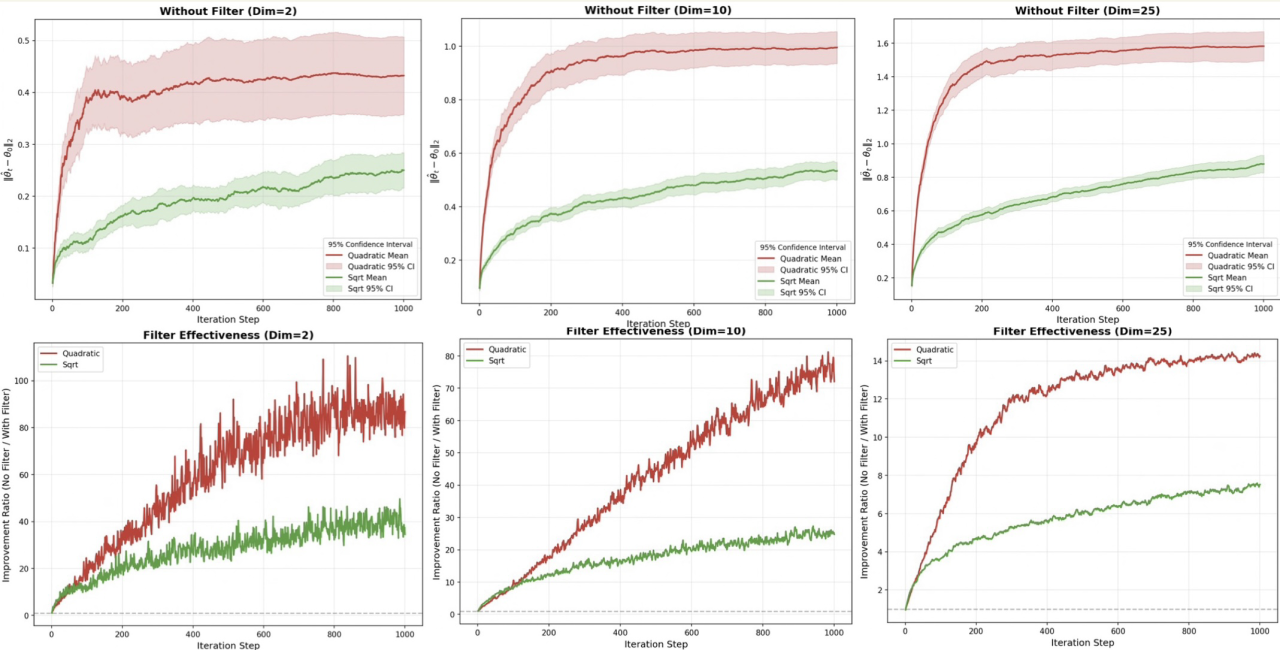}}
\caption{Expectation distance and Improvement ratio(No filter/ with filter) between
the parameters and the true parameters when the sample
grows in  $t^{\frac{1}{2}}$ and $t^{\frac{1}{2}}$ speed without filters in $2$, $10$ and $25$ dimension normal distribution.  Here the red line is  the $t^2$ growth and the green line is $t^{\frac{1}{2}}$ growth. }
\label{dbsy1-2}
\end{center}
\vskip -0.2in
\end{figure}

\section{Discussion}
\label{sec:discussion}

\subsection{ Theoretical Implications}
Our work establishes a novel theoretical framework for preventing model collapse in recursive training of generative models, grounded in contraction operators and Lyapunov stability theory. By formulating the error dynamics as a nonlinear stochastic system and enforcing a contraction condition through a learned neural filter, we demonstrate that model collapse can be avoided even with arbitrary sample growth and arbitrary initial sample. This represents a significant departure from the superlinear sample growth requirement established by \cite{xu2024probabilistic}, offering a more practical and scalable solution for long-term model training.

Our theoretical results (Theorems~\ref{thm1} and~\ref{thm3}) show that the convergence in probability of the estimation error can be achieved under mild assumptions on the noise (which is depend on the work of \cite{xu2024probabilistic}) and contraction function. This implies that the filter can actively steer the parameter estimate back toward the true parameter, effectively acting as a stabilizing controller in the recursive training loop.

To ensure that the mathematical theory works, we have designed a loss function that enables the filter to converge to the hypothesis. After the data labeled under the simulated working flow was used as the filter for training, an excellent screening effect was achieved, effectively preventing the model from collapsing. When the sample is growing superlinearly, although  \cite{xu2024probabilistic} can ensure that the sample parameters do not deviate too far with a sufficient sample size, the requirement for the initial sample size is directly abandoned after adding the filter, and there is an extremely significant improvement compared to not adding the filter.

\subsection{Practical Applications}

The proposed neural filter is particularly relevant in scenarios where high-quality human-generated data is scarce or expensive to obtain. For example:
\begin{enumerate}
    \item {\bf Large Language Models (LLMs):}
    
    As models like GPT-4 and beyond continue to scale, synthetic data will play an increasingly critical role. Our filter can help maintain model quality and diversity over multiple generations of self-training.
    \item {\bf Data Augmentation and Privacy-Preserving Synthesis:}
    
    In domains such as healthcare or finance, where real data is sensitive, synthetic data generated under our framework can be used safely without risking model collapse.
    \item {\bf Continual Learning:}
    
    Our approach can be integrated into continual learning pipelines where models are periodically updated with new synthetic data, ensuring long-term stability.
\end{enumerate}

\subsection{Limitations}
While our method offers a promising solution, several limitations remain:
\begin{enumerate}
    \item We need to know the specific distribution type and the non-specific estimation method of its parameters. We have not yet established a theory for partial estimation.
    \item The filter’s performance depends on the availability of a high-quality reference parameter \(\theta_{\text{good}}\), which may not always be accessible in fully unsupervised settings.
\end{enumerate}

\section{Conclusion}
\subsection{summary}
We conducted mathematical modeling on the filter and identified the sufficient conditions for the convergence of its control system. We also designed a neural network for the filter and incorporated the filter into the neural network. This modification performed very well in the experiment in reducing model collapse and broke through the sample's dependence on superlinear growth and initial parameters.

\subsection{Future work}
In the future, we will consider the theoretical framework when there is partial estimation, and also design neural networks to converge to the theoretical assumptions.

\nocite{xu2024probabilistic}
\nocite{achiam2023gpt4}
\nocite{touvron2023llama}
\nocite{villalobos2024data}
\nocite{shumailov2024collapse}
\nocite{alemberger2024mad}
\bibliography{example_paper}
\bibliographystyle{icml2025}

\newpage
\appendix
\onecolumn
\section{Appendix}

\subsection{proof of Theorem \ref{thm1}}
Notice that $\lim\limits_{t\to\infty}\sigma^2_{t}=0$ which means for each $\varepsilon>0$, there exist $N$ for each $t>N$, we have
\begin{align*}
    \sigma^2_{t}< \varepsilon.
\end{align*}
Now we fix $\varepsilon$ and begin from $t=N+1$.

Taking unconditional expectation in Assumption \ref{as1}:
\[
\E[V(\mathbf{e}_{t+1})] \leq \E[V(\mathbf{e}_t)] - \E[c(\mathbf{e}_t) V(\mathbf{e}_t)] + \varepsilon.
\]
By Assumption \ref{as3}, $c(\mathbf{e}_t) V(\mathbf{e}_t) \geq \fu(V(\mathbf{e}_t))$, so:
\[
\E[V(\mathbf{e}_{t+1})] \leq \E[V(\mathbf{e}_t)] - \E[\fu(V(\mathbf{e}_t))] + \varepsilon. 
\]
Notice that $\fu$ is a convex function, then we have
\begin{align*}
    \E[V(\mathbf{e}_{t+1})] \leq \E[V(\mathbf{e}_t)] - \fu(\E[V(\mathbf{e}_t)]) + \varepsilon.
\end{align*}
Let $x_t:=\E[V(\mathbf{e}_{t+1})]$ and $b:=\varepsilon$ then we get
\begin{align*}
    x_{t+1} \leq  x_{t} - \fu(x_{t+1}) + b.
\end{align*}
then we have
\begin{lemma}
Consider a nonnegative sequence \(\{x_t\}\) satisfying the recurrence inequality:
\[
x_{t+1} \le x_t - \fu(x_t) + b,
\]
where \(b > 0\), and \(\fu\) is a positive convex function (i.e., convex and \(\fu(x) > 0\) for all \(x > 0\)). Assume that \(\fu\) is continuous, and the equation \(\fu(x) = b\) has a largest solution \(L\) (i.e., \(L = \max \{ x \ge 0 : \fu(x) = b \}\)). Then the limit superior of the sequence satisfies:
\[
\limsup\limits_{t \to \infty} x_t \le L.
\]
\end{lemma}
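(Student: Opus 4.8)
The plan is to recast the recurrence through its one-step update map and reduce the $\limsup$ bound to the convergence of a deterministic scalar iteration by a comparison argument. First I would record the structural consequences of the hypotheses on $\fu$. Since $\fu$ is convex with $\fu(0)=0$ and $\fu>0$ on $(0,\infty)$, the difference quotient $x\mapsto \fu(x)/x$ is nondecreasing, which forces $\fu$ to be strictly increasing on $[0,\infty)$. Hence $\fu(x)=b$ has a unique root, so the ``largest solution'' $L$ is in fact the unique one, and $\fu(x)>b$ for every $x>L$ while $\fu(x)<b$ for $0\le x<L$.

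Next I would introduce the update map $g(x):=x-\fu(x)+b$, so that the hypothesis reads $0\le x_{t+1}\le g(x_t)$. From the previous step, $g(x)=x$ iff $x=L$, and $g(x)<x$ precisely for $x>L$: this is the ``pull-back above $L$'' that should drive the $\limsup$ down to $L$. The quantitative version I would extract is that for any fixed $\eta>L$, whenever $x_t\ge\eta$ one has $x_{t+1}\le x_t-(\fu(\eta)-b)$ with $\fu(\eta)-b>0$; thus the sequence strictly decreases by a fixed positive amount while it stays above $\eta$, and so it cannot remain above $\eta$ for infinitely many consecutive steps, i.e. it must re-enter $[0,\eta)$.

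The decisive step is to control the size of the rebound after the sequence drops toward or below $L$, and here I would pass to a comparison with the deterministic iteration $y_0:=x_0$, $y_{t+1}:=g(y_t)$. If $g$ is nondecreasing then an easy induction gives $x_t\le y_t$ for all $t$: indeed $x_{t+1}\le g(x_t)\le g(y_t)=y_{t+1}$. For a nondecreasing scalar map with unique fixed point $L$ satisfying $g(x)<x$ for $x>L$ and $g(x)>x$ for $x<L$, the iterates $y_t$ are eventually monotone and converge to $L$ (decreasing to $L$ once above it, rising to $L$ from below), whence $\limsup_t x_t\le \lim_t y_t=L$, which is the claim.

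The main obstacle is exactly the monotonicity used in the comparison. The bare inequality $x_{t+1}\le g(x_t)$ allows $x_{t+1}$ to fall arbitrarily far below $g(x_t)$ in a single step, and if $g$ failed to be monotone the following iterate, bounded only by $g$ evaluated at a small argument, could overshoot $L$ again, so that repeated dips-and-rebounds would keep $\limsup_t x_t$ strictly above $L$. Monotonicity of $g$, equivalently that $\fu$ is $1$-Lipschitz ($\fu(x_2)-\fu(x_1)\le x_2-x_1$ for $x_2>x_1$), rules this out; I would verify it for the running example $\fu(r)=r\bigl(1-(r+1)^{-1/2}\bigr)$, whose derivative lies in $[0,1)$. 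The cleanest route is therefore to invoke this nonexpansiveness explicitly as the missing structural ingredient that closes the argument for the statement as worded.
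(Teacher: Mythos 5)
Your instinct about monotonicity is not a technical convenience — it is exactly the missing ingredient, and the lemma as stated is in fact false without it. Take $f(x)=x^{2}$, $b=1$ (so $L=1$), and the periodic nonnegative sequence $x_{2k}=\tfrac12$, $x_{2k+1}=\tfrac54$: the recurrence holds at every step (from $\tfrac12$ one gets $\tfrac12-\tfrac14+1=\tfrac54$ with equality; from $\tfrac54$ one gets $\tfrac54-\tfrac{25}{16}+1=\tfrac{11}{16}\ge\tfrac12$), yet $\limsup_{t}x_{t}=\tfrac54>L$. This is precisely your dip-and-rebound scenario, realized because $g(x)=x-f(x)+b$ is decreasing past $x=\tfrac12$. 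The paper's own proof takes a genuinely different route from yours — contradiction along a subsequence $x_{t_k}\to U>L$, deducing $x_{t_k+1}\le x_{t_k}-\delta$ — and it fails at the last step, where it asserts that the limit superior of the subsequence $\{x_{t_k+1}\}$ ``must be at least $U$'': subsequential limit superiors are at most $U$, not at least, and the counterexample above shows the argument cannot be repaired under the stated hypotheses. Your comparison argument ($x_t\le y_t$ with $y_{t+1}=g(y_t)$, $g$ nondecreasing, $y_t\to L$ monotonically) is therefore the correct fix, at the price of the added nonexpansiveness hypothesis; the paper's subsequence argument, had it been valid, would not have needed it.

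It is worth noting that your added hypothesis comes for free exactly where the lemma is applied. Assumption~\ref{as1} requires $c(\mathbf{e})\in[0,1)$, and Assumption~\ref{as3} gives $f(V(\mathbf{e}))\le c(\mathbf{e})V(\mathbf{e})\le V(\mathbf{e})$; since $V(\mathbf{e})=\mathbf{e}^{T}P\mathbf{e}$ with $P\succ 0$ takes every value in $[0,\infty)$, this yields $f(x)\le x$ for all $x\ge 0$. A convex $f$ with $f(0)=0$ and $f(x)\le x$ is automatically $1$-Lipschitz: for $0\le x<y$, convexity gives $\frac{f(y)-f(x)}{y-x}\le\frac{f(z)-f(x)}{z-x}\le\frac{z}{z-x}\to 1$ as $z\to\infty$, so $g$ is nondecreasing, as your argument requires (this also subsumes your verification for the running example $f(r)=r\bigl(1-(r+1)^{-1/2}\bigr)$). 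So the correct repair is to restate the lemma with the hypothesis $f(x)\le x$ (or $f$ nonexpansive) — which the surrounding assumptions already supply — and to replace the paper's subsequence argument with your comparison iteration; also note that $f(0)=0$, used in your strict-monotonicity step, is supplied by Assumption~\ref{as3} though absent from the lemma's own wording.
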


\begin{proof}
Assume, for contradiction, that \(\limsup\limits_{t \to \infty} x_t = U > L\). By the definition of limit superior, there exists a subsequence \(\{x_{t_k}\}\) such that:
\[
\lim_{k \to \infty} x_{t_k} = U.
\]
Since \(U > L\), choose \(\epsilon > 0\) such that \(U > L + \epsilon\). Then for sufficiently large \(k\), we have:
\[
x_{t_k} > U - \frac{\epsilon}{2} > L + \frac{\epsilon}{2}.
\]
Because \(\fu\) is continuous and \(L\) is the largest solution of \(\fu(x) = b\), for any \(x > L\), we have \(\fu(x) > b\). In particular, for \(x > L + \epsilon/2\), \(\fu(x) > b\). Since \(x_{t_k} \to U\) and \(\fu\) is continuous, it follows that:
\[
\lim_{k \to \infty} \fu(x_{t_k}) = \fu(U) > \fu(L) = b.
\]
Thus, there exists \(\delta > 0\) and \(K_1 \in \mathbb{t}\) such that for all \(k \ge K_1\):
\[
\fu(x_{t_k}) \ge b + \delta.
\]
Substituting into the recurrence inequality:
\[
x_{t_k+1} \le x_{t_k} - \fu(x_{t_k}) + b \le x_{t_k} - (b + \delta) + b = x_{t_k} - \delta.
\]
That is:
\[
x_{t_k+1} \le x_{t_k} - \delta.
\]
On the other hand, since \(x_{t_k} \to U\), for the given \(\delta > 0\), there exists \(K_2 \in \mathbb{t}\) such that for all \(k \ge K_2\):
\[
x_{t_k} < U + \frac{\delta}{2}.
\]
Take \(k \ge \max\{K_1, K_2\}\). Then:
\[
x_{t_k+1} \le x_{t_k} - \delta < U + \frac{\delta}{2} - \delta = U - \frac{\delta}{2}.
\]
However, \(\{x_{t_k+1}\}\) is also a subsequence of \(\{x_t\}\), so its limit superior must be at least \(U\). This contradicts the fact that \(x_{t_k+1} < U - \delta/2\) for all sufficiently large \(k\). Therefore, the initial assumption is false, and we conclude:
\[
\limsup\limits_{t \to \infty} x_t \le L.
\]
\end{proof}

Notice that $L$ is only dependent on $b$, which is independent on $x_0$

note that:
\[
\{\|\mathbf{e}_t\| > R\} = \{V(\mathbf{e}_t) > \lambda_{\min}(P) R^2\}
\]
where $\lambda_{\min}(P) > 0$ is the minimum eigenvalue of matrix $P$.

This is because:
\[
V(\mathbf{e}) = \mathbf{e}^T P \mathbf{e} \geq \lambda_{\min}(P) \|\mathbf{e}\|^2
\]

Applying Markov's inequality:
\[
\mathbb{P}(\|\mathbf{e}_t\| > R) = \mathbb{P}(V(\mathbf{e}_t) > \lambda_{\min}(P) R^2) \leq \frac{\mathbb{E}[V(\mathbf{e}_t)]}{\lambda_{\min}(P) R^2}
\]
Notice thar $\fu$ is  monotonic.
\begin{lemma}
Let $f: [0, \infty) \to [0, \infty)$ be a convex function with $f(0) = 0$. Then $f$ is monotonic. Specifically, $f$ is non-decreasing.
\end{lemma}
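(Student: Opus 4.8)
The plan is to establish the non-decreasing property directly from the definition of convexity, leveraging \emph{both} hypotheses $f(0)=0$ and $f \ge 0$ (each will be needed). Concretely, I would fix arbitrary points $0 \le x_1 < x_2$ and aim to show $f(x_1) \le f(x_2)$. The case $x_1 = 0$ is immediate, since then $f(x_1) = f(0) = 0 \le f(x_2)$ by nonnegativity, so I may assume $x_1 > 0$, and in particular $x_2 > 0$ as well.

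The key step is to represent the smaller argument as a convex combination of the origin and the larger argument. Setting $\lambda = x_1/x_2 \in (0,1)$, one has $x_1 = \lambda x_2 + (1-\lambda)\cdot 0$. Applying the convexity inequality to this combination and then substituting $f(0)=0$ gives
\[
f(x_1) \le \lambda f(x_2) + (1-\lambda)f(0) = \frac{x_1}{x_2}\, f(x_2).
\]
Since $0 < x_1/x_2 \le 1$ and $f(x_2) \ge 0$ by hypothesis, the right-hand side is at most $f(x_2)$, which yields $f(x_1) \le f(x_2)$. As $x_1 < x_2$ were arbitrary, $f$ is non-decreasing on $[0,\infty)$.

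There is no genuinely hard step here; the entire content lies in choosing the right convex combination (expressing the inner point as a weighting of $0$ and $x_2$). The one point worth flagging is that the nonnegativity assumption does real work and cannot be dropped: convexity together with $f(0)=0$ alone does \emph{not} force monotonicity (for instance $f(x)=x^2-2x$ is convex with $f(0)=0$ yet strictly decreases on $[0,1)$). It is precisely the constraint $f \ge 0$ that forbids such downward excursions and collapses the chord bound $f(x_1) \le (x_1/x_2)f(x_2)$ into $f(x_1)\le f(x_2)$. An equivalent route, should one prefer it, is to invoke the monotonicity of difference quotients for convex functions: the secant slope $(f(x)-f(0))/(x-0) = f(x)/x$ from the origin is non-decreasing in $x$, and combining this with $f\ge 0$ again produces the claim; I would present the direct convex-combination argument above as the cleaner option.
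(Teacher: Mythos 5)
Your proof is correct and follows essentially the same route as the paper's: both express $x_1$ as the convex combination $\lambda x_2 + (1-\lambda)\cdot 0$ with $\lambda = x_1/x_2$, apply convexity with $f(0)=0$ to obtain $f(x_1) \le (x_1/x_2) f(x_2)$, and then use nonnegativity of $f$ to conclude. Your remark that $f \ge 0$ is genuinely needed (with the counterexample $x^2 - 2x$) is a nice addition, but the argument itself matches the paper's.
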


\begin{proof}
Let $0 \leq x < y$. Since $f$ is convex and $f(0) = 0$, we can write $x$ as a convex combination of $0$ and $y$. Let $\lambda = \frac{x}{y} \in [0, 1)$. Then:
\[
x = \lambda y + (1 - \lambda) \cdot 0
\]
By convexity of $f$:
\[
f(x) = f(\lambda y + (1 - \lambda) \cdot 0) \leq \lambda f(y) + (1 - \lambda) f(0) = \lambda f(y) = \frac{x}{y} f(y)
\]
Thus, $f(x) \leq \frac{x}{y} f(y)$.

Now consider two cases:
\begin{enumerate}
    \item If $f(y) > 0$, then $f(x) \leq \frac{x}{y} f(y) < f(y)$ since $\frac{x}{y} < 1$.
    \item If $f(y) = 0$, then $f(x) \leq 0$, but since $f(x) \geq 0$ by the codomain, we have $f(x) = 0 = f(y)$.
\end{enumerate}
In both cases, $f(x) \leq f(y)$ for all $0 \leq x < y$, so $f$ is non-decreasing.

Even if we consider the possibility of $f$ being decreasing, for any $x > 0$ we would have $f(x) \leq f(0) = 0$, and since $f(x) \geq 0$, this implies $f(x) = 0$ for all $x$, which is constant (and therefore also monotonic).

Hence, in all cases, $f$ is monotonic.
\end{proof}
then we have
\begin{align*}
    \limsup\limits_{t \to \infty} \mathbb{P}(\|\mathbf{e}_t\| > R) \leq \frac{\fu^{-1}(\varepsilon)}{\lambda_{\min}(P) R^2}
\end{align*}
the right hand can be arbitrary small, thus
\begin{align*}
    \limsup\limits_{t \to \infty} \mathbb{P}(\|\mathbf{e}_t\| > R) =0
\end{align*}

\subsection{proof of Theorem \ref{thm2}}

\begin{lemma}[Linear Case Comparison]
\label{lemma:linear}
Consider the recurrence:
\[
y_{t+1} \leq (1 - \alpha)y_t + b_t
\]
where $0 < \alpha < 1$ and $b_t = O(t^{-\beta})$. Then:
\[
y_t = O\left(\max\left((1-\alpha)^t, t^{-\beta}\right)\right)
\]
\end{lemma}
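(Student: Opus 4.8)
The plan is to \emph{unroll} the recurrence into closed form and then estimate the resulting geometric--polynomial convolution. Iterating $y_{t+1} \le (1-\alpha)y_t + b_t$ from $0$ up to $t$ yields
\[
y_t \le (1-\alpha)^t y_0 + \sum_{k=0}^{t-1}(1-\alpha)^{t-1-k} b_k.
\]
The first summand is purely geometric and contributes $O((1-\alpha)^t)$, accounting for the decay of the initial condition. Everything then reduces to controlling the convolution of the geometric kernel $(1-\alpha)^{t-1-k}$ with the forcing sequence $b_k$, for which I would use the hypothesis to fix a constant $C$ with $b_k \le C k^{-\beta}$ for $k\ge 1$ (the term $k=0$ being a constant absorbed into the geometric part).

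First I would \emph{split the convolution} at $k=\lfloor t/2\rfloor$ into a head and a tail. On the tail $t/2 \le k \le t-1$ the forcing is already small, $b_k \le C (t/2)^{-\beta} = O(t^{-\beta})$, while the remaining geometric weights sum to at most $\sum_{j\ge 0}(1-\alpha)^j = 1/\alpha$; hence the tail contributes $O(t^{-\beta})$, the leading term. On the head $0 \le k < t/2$ the geometric factor is uniformly small, $(1-\alpha)^{t-1-k} \le (1-\alpha)^{t/2-1}$, so the head is at most $(1-\alpha)^{t/2-1}\sum_{k} b_k$; whether $\sum_k b_k$ converges ($\beta>1$) or grows like $t^{1-\beta}$ or $\log t$ ($\beta\le 1$), the geometric prefactor dominates and the head is $O(\rho^t)$ for any fixed $\rho\in((1-\alpha)^{1/2},1)$.

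Collecting the three pieces gives $y_t = O((1-\alpha)^t) + O(\rho^t) + O(t^{-\beta})$. Since $(1-\alpha)^t$ and $\rho^t$ decay geometrically whereas $t^{-\beta}$ decays only polynomially, both geometric terms are $O(t^{-\beta})$; indeed $\max((1-\alpha)^t, t^{-\beta}) = t^{-\beta}$ for every $t\ge 1$. Hence $y_t = O(t^{-\beta}) = O(\max((1-\alpha)^t, t^{-\beta}))$, the maximum being written in this form only to parallel the $p>1$ branch of \cref{thm2}.

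I expect the \emph{main obstacle} to be the head estimate when $\beta \le 1$, where $\sum_{k=1}^{t/2} k^{-\beta}$ itself diverges polynomially (or logarithmically) in $t$; the point to get right is that this growth is harmless because it multiplies a genuinely geometric factor, so weakening the base from $1-\alpha$ to any $\rho\in((1-\alpha)^{1/2},1)$ absorbs it. A cleaner alternative I would consider, avoiding the case analysis, is a direct induction showing $y_t \le M\max((1-\alpha)^t, t^{-\beta})$ for a suitably large $M$: this requires only the single monotonicity check that the candidate envelope satisfies $z_{t+1}\ge(1-\alpha)z_t + b_t$, trading the convolution bookkeeping for verifying one inequality across the two branches of the maximum.
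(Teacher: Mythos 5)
Your proof is correct and follows essentially the same route as the paper's: unroll the recurrence and split the geometric--polynomial convolution at $k=\lfloor t/2\rfloor$, bounding the tail by $O(t^{-\beta})$ via the summable geometric weights and the head by the uniformly small geometric prefactor. In fact your treatment of the head is slightly more careful than the paper's, which bounds it by $(1-\alpha)^{t/2-1}\sum_{k=1}^{\infty}k^{-\beta}$ and thus tacitly assumes $\beta>1$; your observation that a divergent $\sum_k b_k$ (for $\beta\le 1$) is harmlessly absorbed by weakening the geometric base to any $\rho\in((1-\alpha)^{1/2},1)$ closes that small gap.
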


\begin{proof}
The solution can be written as:
\[
y_t \leq (1-\alpha)^t y_0 + \sum_{k=0}^{t-1} (1-\alpha)^{t-1-k} b_k
\]
Since $b_k \leq B k^{-\beta} \leq B$ for some constant $B > 0$, the second term is bounded by:
\[
\sum_{k=0}^{t-1} (1-\alpha)^{t-1-k} b_k \leq B \sum_{j=0}^{t-1} (1-\alpha)^j \leq \frac{B}{\alpha}
\]
For the precise decay rate, we analyze the convolution sum more carefully.

Let $S_t = \sum_{k=0}^{t-1} (1-\alpha)^{t-1-k} b_k$. Since $b_k = O(k^{-\beta})$, there exists $C > 0$ such that:
\[
S_t \leq C \sum_{k=1}^{t-1} (1-\alpha)^{t-1-k} k^{-\beta}
\]

We split the sum at $k = t/2$:
\[
S_t \leq C\left(\sum_{k=1}^{\lfloor t/2 \rfloor} (1-\alpha)^{t-1-k} k^{-\beta} + \sum_{k=\lfloor t/2 \rfloor+1}^{t-1} (1-\alpha)^{t-1-k} k^{-\beta}\right)
\]

For the first sum, since $k \leq t/2$, we have $t-1-k \geq t/2 - 1$, so:
\[
\sum_{k=1}^{\lfloor t/2 \rfloor} (1-\alpha)^{t-1-k} k^{-\beta} \leq (1-\alpha)^{t/2-1} \sum_{k=1}^{\infty} k^{-\beta} = O((1-\alpha)^{t/2})
\]

For the second sum, since $k \geq t/2$, we have $k^{-\beta} = O(t^{-\beta})$, so:
\[
\sum_{k=\lfloor t/2 \rfloor+1}^{t-1} (1-\alpha)^{t-1-k} k^{-\beta} = O(t^{-\beta}) \sum_{j=0}^{t-1} (1-\alpha)^j = O(t^{-\beta})
\]

Therefore, $S_t = O\left(\max\left((1-\alpha)^{t/2}, t^{-\beta}\right)\right)$, and consequently:
\[
y_t = O\left(\max\left((1-\alpha)^t, t^{-\beta}\right)\right)
\]
\end{proof}

\begin{lemma}[Polynomial Case Comparison]
\label{lemma:polynomial}
Consider the recurrence:
\[
y_{t+1} \leq y_t - c y_t^p + b_t
\]
where $p > 1$, $c > 0$, and $b_t = O(t^{-\beta})$. Then:
\[
y_t = O\left(\max\left(t^{-\frac{1}{p-1}}, t^{-\frac{\beta}{p}}\right)\right)
\]
\end{lemma}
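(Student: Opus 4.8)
The plan is to prove the one-sided bound $y_t = O(t^{-\gamma})$ with $\gamma := \min\{1/(p-1),\,\beta/p\}$, since for $t\ge 1$ one has $\max(t^{-1/(p-1)},t^{-\beta/p}) = t^{-\min\{1/(p-1),\beta/p\}} = t^{-\gamma}$ (between two negative powers the larger is the one with the smaller exponent). The whole argument rests on a single algebraic consequence of this choice: because $\gamma\le 1/(p-1)$ and $\gamma\le \beta/p$ simultaneously, clearing denominators gives $\gamma p\le \gamma+1$ and $\gamma p\le\beta$, hence $\gamma p\le\min\{\gamma+1,\beta\}$. Therefore, for $t\ge 1$, both $t^{-(\gamma+1)}\le t^{-\gamma p}$ and $b_t\le B t^{-\beta}\le B t^{-\gamma p}$ hold, so the two ``error'' terms (the shift correction and the noise) are each dominated by the single gain term $t^{-\gamma p}$. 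This is exactly what lets a single induction close with no case split between the two regimes.

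First I would establish $y_t\to 0$. Fix $\epsilon>0$; since $b_t\to 0$ there is $N$ with $b_t\le\epsilon$ for $t\ge N$, and on this tail $y_{t+1}\le y_t - c y_t^p + \epsilon$. Treating $c u^p$ as the convex function in the limit-superior lemma already proved in the appendix (the one giving $\limsup_t x_t\le L$ for $x_{t+1}\le x_t - f(x_t) + b$, with $L$ the largest root of $f(L)=b$), I get $\limsup_t y_t \le (\epsilon/c)^{1/p}$; letting $\epsilon\downarrow 0$ yields $y_t\to 0$. This step is genuinely needed, because it forces the iterates eventually into the region $[0,u^*]$, $u^*:=(cp)^{-1/(p-1)}$, on which $\psi(u):=u-cu^p$ is nondecreasing — and monotonicity of $\psi$ is the engine of the inductive step.

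Then I would run an induction on the ansatz $y_t\le K t^{-\gamma}$. The inductive step inserts the hypothesis into $y_{t+1}\le \psi(y_t)+b_t$, uses $\psi(y_t)\le\psi(K t^{-\gamma})$ (valid once $Kt^{-\gamma}\le u^*$), and compares with $K(t+1)^{-\gamma}$. By the mean value theorem $K\bigl(t^{-\gamma}-(t+1)^{-\gamma}\bigr)\le K\gamma\,t^{-\gamma-1}$, so it suffices to verify $K\gamma\,t^{-\gamma-1} + b_t \le cK^p\,t^{-\gamma p}$. The exponent inequality $\gamma p\le\min\{\gamma+1,\beta\}$ bounds the left side by $(K\gamma+B)\,t^{-\gamma p}$, reducing everything to the $t$-free condition $K\gamma + B \le cK^p$, which holds for all large $K$ because $p>1$.

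The step I expect to be the real obstacle — and where I would spend the care — is the consistent choice of the threshold $t_1$ and the constant $K$, an order-of-quantifiers matter rather than any hard estimate. I cannot simply invoke ``$y_t\to 0$'' to start the induction, since a priori $y_{t_1}$ might decay more slowly than $t_1^{-\gamma}$, so $y_{t_1}\le K t_1^{-\gamma}$ could fail for the minimal admissible $K$. The fix is to fix $t_1$ \emph{first} — large enough that $b_t\le B t^{-\beta}$ for $t\ge t_1$, that $y_t\le u^*$ there, and that $K_0 t_1^{-\gamma}\le u^*$ for the least $K_0$ solving $K\gamma+B\le cK^p$ — and only then set $K:=\max\{K_0,\,y_{t_1}t_1^{\gamma}\}$. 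This makes the base case automatic, keeps the algebraic condition satisfied (as $K\ge K_0$), and guarantees $K t^{-\gamma}\le K t_1^{-\gamma}=\max\{K_0 t_1^{-\gamma},\,y_{t_1}\}\le u^*$ for every $t\ge t_1$, so that monotonicity of $\psi$ is available throughout. With these choices the induction propagates and delivers $y_t\le K t^{-\gamma}$ for all $t\ge t_1$, i.e. $y_t=O(t^{-\gamma})$, which is the claim.
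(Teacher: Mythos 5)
Your proof is correct and follows the same core strategy as the paper's: an induction on the power-law ansatz $y_t \le K t^{-\gamma}$ with $\gamma = \min\{1/(p-1),\,\beta/p\}$, using the mean value theorem to control $t^{-\gamma}-(t+1)^{-\gamma}$ and reducing the inductive step to a $t$-free inequality in the constant. Two of your refinements are worth noting. First, you collapse the paper's two-case analysis ($\gamma = 1/(p-1)$ versus $\gamma = \beta/p$) into the single observation $\gamma p \le \min\{\gamma+1,\beta\}$, which dominates both error terms by $t^{-\gamma p}$ at once; this is a genuine simplification. Second, and more substantively, you make explicit a point the paper's proof glosses over: passing from $y_t \le g(t)$ to $y_t - c y_t^p \le g(t) - c\,[g(t)]^p$ requires the map $u \mapsto u - c u^p$ to be nondecreasing on the range of the iterates, which holds only for $u \le (cp)^{-1/(p-1)}$. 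Your preliminary step establishing $y_t \to 0$ (via the limsup lemma already proved in the appendix) and your arrangement that $K t^{-\gamma}$ stays below this threshold throughout the induction range supply exactly the justification the paper omits; your ordering of the choices of $t_1$ and $K$ likewise resolves the base-case/constant interdependence that the paper's ``choose $N_0$\dots and choose $A$\dots'' leaves implicit. The argument is complete as written.
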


\begin{proof}
We use the comparison function method. Let $\gamma = \min\left(\frac{1}{p-1}, \frac{\beta}{p}\right)$ and define $g(t) = A t^{-\gamma}$ for some constant $A > 0$ to be determined.

We want to show by induction that there exists $N_0$ such that for all $t \geq N_0$, $y_t \leq g(t)$.

\textbf{Base case}: Choose $N_0$ large enough so that $g(N_0) \leq x_0$ (to ensure the asymptotic bound $f(x) \geq c_1 x^p$ applies) and choose $A$ such that $y_{N_0} \leq g(N_0)$.

\textbf{Inductive step}: Assume $y_t \leq g(t)$ and prove $y_{t+1} \leq g(t+1)$.

From the recurrence:
\[
y_{t+1} \leq g(t) - c [g(t)]^p + b_t = A t^{-\gamma} - c A^p t^{-\gamma p} + B t^{-\beta}
\]
where $b_t \leq B t^{-\beta}$ for some $B > 0$.

We need to show:
\[
A t^{-\gamma} - c A^p t^{-\gamma p} + B t^{-\beta} \leq A (t+1)^{-\gamma}
\]

Using the inequality $(t+1)^{-\gamma} \geq t^{-\gamma} \left(1 - \frac{\gamma}{t}\right)$ (from the mean value theorem), it suffices to show:
\[
A t^{-\gamma} - c A^p t^{-\gamma p} + B t^{-\beta} \leq A t^{-\gamma} - A \gamma t^{-\gamma-1}
\]
which simplifies to:
\[
- c A^p t^{-\gamma p} + B t^{-\beta} \leq - A \gamma t^{-\gamma-1}
\]
or equivalently:
\[
c A^p t^{-\gamma p} \geq B t^{-\beta} + A \gamma t^{-\gamma-1} \tag{1}
\]

We now consider two cases based on the value of $\gamma$:

\textbf{Case 1}: $\gamma = \frac{1}{p-1}$

Then $\gamma p = \frac{p}{p-1}$ and since $\gamma = \min\left(\frac{1}{p-1}, \frac{\beta}{p}\right) = \frac{1}{p-1}$, we have $\frac{\beta}{p} \geq \frac{1}{p-1}$, i.e., $\beta \geq \frac{p}{p-1}$.

Thus:
\begin{itemize}
\item $t^{-\beta} \leq t^{-\gamma p}$ (since $\beta \geq \gamma p$)
\item $t^{-\gamma-1} = t^{-\frac{p}{p-1}} = t^{-\gamma p}$
\end{itemize}

So inequality (1) becomes:
\[
c A^p t^{-\gamma p} \geq (B + A \gamma) t^{-\gamma p}
\]
which is satisfied if $c A^p \geq B + A \gamma$.

\textbf{Case 2}: $\gamma = \frac{\beta}{p}$

Then $\gamma p = \beta$ and since $\gamma = \min\left(\frac{1}{p-1}, \frac{\beta}{p}\right) = \frac{\beta}{p}$, we have $\frac{\beta}{p} \leq \frac{1}{p-1}$, i.e., $\beta \leq \frac{p}{p-1}$.

Thus:
\begin{itemize}
\item $t^{-\beta} = t^{-\gamma p}$
\item $t^{-\gamma-1} = t^{-(\frac{\beta}{p}+1)} = o(t^{-\beta})$ since $\frac{\beta}{p} + 1 > \beta$ when $\beta \leq \frac{p}{p-1}$ and $p > 1$
\end{itemize}

For sufficiently large $t$, inequality (1) is satisfied if:
\[
c A^p t^{-\gamma p} \geq B t^{-\gamma p}
\]
i.e., if $c A^p \geq B$.

In both cases, we can choose $A$ sufficiently large to satisfy the required inequality. By induction, $y_t \leq g(t)$ for all $t \geq N_0$, which proves the result.
\end{proof}

Since $x_t \to 0$ (by Theorem 1), there exists $N$ such that for all $t \geq N$, $x_t \leq x_0$. Therefore, for $t \geq N$, Assumption \ref{assump:function} gives us $f(x_t) \geq c_1 x_t^p$.

\textbf{Case 1: $p = 1$}

For $t \geq N$, the recurrence becomes:
\[
x_{t+1} \leq x_t - c_1 x_t + \sigma_t^2 = (1 - c_1) x_t + \sigma_t^2
\]

By Assumption \ref{assump:noise}, $\sigma_t^2 \leq B t^{-\beta}$ for some $B > 0$. Applying Lemma \ref{lemma:linear} with $\alpha = c_1$ and $b_t = \sigma_t^2$, we obtain:
\[
x_t = O\left(\max\left((1-c_1)^t, t^{-\beta}\right)\right) = O\left(\max\left(e^{-ct}, t^{-\beta}\right)\right)
\]
where $c = -\log(1-c_1) > 0$.

\textbf{Case 2: $p > 1$}

For $t \geq N$, the recurrence becomes:
\[
x_{t+1} \leq x_t - c_1 x_t^p + \sigma_t^2
\]

By Assumption \ref{assump:noise}, $\sigma_t^2 \leq B t^{-\beta}$ for some $B > 0$. Applying Lemma \ref{lemma:polynomial} with $c = c_1$ and $b_t = \sigma_t^2$, we obtain:
\[
x_t = O\left(\max\left(t^{-\frac{1}{p-1}}, t^{-\frac{\beta}{p}}\right)\right)
\]

This completes the proof of Theorem \ref{thm2}.

\subsection{proof of Theorem \ref{thm3}}

We have this lemma

 \begin{lemma}
Suppose that $\widehat{\boldsymbol{\theta}}=\mathcal{M}(\mathcal{D})$ is an estimate of
$\boldsymbol{\theta}$ with $\mathcal{D}=\{\boldsymbol{x}_{i}\}_{i=1}^{t}\sim\mathbb{P}_{\boldsymbol{\theta}}$
satisfying Assumption~\ref{initass} with $r(t)=t^{\kappa}$. It then follows that
\begin{align*}
\mathbb{E}[|\widehat{\theta}_{i}-\theta_{i}|^{p}]
\leq\frac{pC_{1}}{\gamma}\cdot\Gamma\left(\frac{p}{\gamma}\right)\cdot C_{2}^{-p/\gamma}t^{-\frac{p\kappa}{\gamma}},
\end{align*}
for any $i\in[p]$, where $\widehat{\theta}_{i}$ denotes the $i$-th element of $\widehat{\boldsymbol{\theta}}$.
\end{lemma}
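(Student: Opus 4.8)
The plan is to obtain the moment bound directly from the exponential tail inequality of Assumption~\ref{initass}, specialized to $r(t)=t^{\kappa}$, using the layer-cake (tail-integral) representation of the $p$-th moment. Substituting $r(t)=t^{\kappa}$ into Assumption~\ref{initass} gives
\[
\prob\!\left(\|\bthetahat-\btheta\|_{2}\geq u\right)\leq C_{1}\exp\!\left(-C_{2}t^{\kappa}u^{\gamma}\right),\qquad u>0.
\]

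First I would pass from the full vector to a single coordinate. Because $|\widehat{\theta}_{i}-\theta_{i}|\leq\|\bthetahat-\btheta\|_{2}$ holds deterministically for every coordinate $i$, the event $\{|\widehat{\theta}_{i}-\theta_{i}|\geq u\}$ is contained in $\{\|\bthetahat-\btheta\|_{2}\geq u\}$, so the same exponential tail bound applies to the nonnegative scalar $X_{i}:=|\widehat{\theta}_{i}-\theta_{i}|$. Then I would apply the identity $\E[X_{i}^{p}]=\int_{0}^{\infty}p\,u^{p-1}\prob(X_{i}>u)\,du$, valid for any nonnegative random variable, to obtain
\[
\E[X_{i}^{p}]\leq pC_{1}\int_{0}^{\infty}u^{p-1}\exp\!\left(-C_{2}t^{\kappa}u^{\gamma}\right)du.
\]

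The remaining work is the single integral, which I would evaluate with the substitution $v=C_{2}t^{\kappa}u^{\gamma}$. This turns the integral into a Gamma integral: the prefactors of $C_{2}t^{\kappa}$ collect into $(C_{2}t^{\kappa})^{-p/\gamma}$, and what is left is $\tfrac{1}{\gamma}\int_{0}^{\infty}v^{p/\gamma-1}e^{-v}\,dv=\tfrac{1}{\gamma}\Gamma(p/\gamma)$, finite since $p/\gamma>0$. Collecting the constants yields exactly
\[
\E[X_{i}^{p}]\leq\frac{pC_{1}}{\gamma}\,\Gamma\!\left(\frac{p}{\gamma}\right)C_{2}^{-p/\gamma}\,t^{-\frac{p\kappa}{\gamma}}.
\]
I do not expect a genuine analytic obstacle here: the Gaussian-type tail makes the integral converge at infinity and the factor $u^{p-1}$ is integrable at the origin, so the only care needed is bookkeeping the exponents through the change of variables so that they combine into $-p/\gamma$ (for the $C_{2}t^{\kappa}$ prefactor) and $p/\gamma-1$ (for the residual power of $v$) as required. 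The mild point worth stating explicitly is that the coordinate-wise tail is legitimately inherited from the $\ell_{2}$ tail, which is what licenses applying the bound to $X_{i}$ rather than to the full norm.
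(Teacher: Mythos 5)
Your proof is correct, and the computation checks out: the coordinate-wise tail is inherited from the $\ell_{2}$ tail since $|\widehat{\theta}_{i}-\theta_{i}|\leq\|\widehat{\boldsymbol{\theta}}-\boldsymbol{\theta}\|_{2}$, the layer-cake identity applies to the nonnegative variable $|\widehat{\theta}_{i}-\theta_{i}|$, and the substitution $v=C_{2}t^{\kappa}u^{\gamma}$ produces exactly the prefactor $\frac{1}{\gamma}(C_{2}t^{\kappa})^{-p/\gamma}\Gamma(p/\gamma)$, matching the stated constant. There is nothing to compare against on the paper's side: the paper states this lemma without any proof, importing it from \cite{xu2024probabilistic}, so your argument fills a gap rather than duplicating or diverging from an existing one. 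The only cosmetic caveat is the paper's notational clash between $p$ as the moment order and $p$ as the parameter dimension (as in $i\in[p]$), which your proof correctly treats as two separate roles.
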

Which means that when $\gamma,\kappa>0$, and the estimate is unbiased, the Assumption~\ref{as2} is satisfied. Then by Theorem~\ref{thm1}, we can get 
for each $\delta>0$, $$\limsup\limits_{t \to \infty} \mathbb{P}(\|\mathbf{e}_t\| > \delta)=0.$$

\end{document}